 \newtheorem{ittheorem}{Theorem}
 \newtheorem{itlemma}{Lemma}
 \newtheorem{itproposition}{Proposition}
 \newtheorem{itdefinition}{Definition}
 \newtheorem{itremark}{Remark}
 \newtheorem{itclaim}{Claim}
 \newtheorem{itcorollary}{\bf Corollary}
 \newenvironment{theorem}{\addtocounter{equation}{1}
 \begin{ittheorem}}{\end{ittheorem}}
 \newenvironment{lemma}{\addtocounter{equation}{1}
 \begin{itlemma}}{\end{itlemma}}
 \newenvironment{proposition}{\addtocounter{equation}{1}
 \begin{itproposition}}{\end{itproposition}}
 \newenvironment{definition}{\addtocounter{equation}{1}
 \begin{itdefinition}}{\end{itdefinition}}
 \newenvironment{remark}{\addtocounter{equation}{1}
 \begin{itremark}}{\end{itremark}}
 \newenvironment{claim}{\addtocounter{equation}{1}
 \begin{itclaim}}{\end{itclaim}}
 \newenvironment{proof}{\noindent {\bf Proof.\,}
 }{\hspace*{\fill}$\qed$\medskip}
 \newenvironment{corollary}{\addtocounter{equation}{1}
 \begin{itcorollary}}{\end{itcorollary}}
 \newcommand{\be}[1]{\begin{eqnarray*}\label{#1}}
 \newcommand{\ee}{\end{eqnarray*}}
 \newcommand{\bl}[1]{\begin{lemma}\label{#1}}
 \newcommand{\el}{\end{lemma}}
 \newcommand{\br}[1]{\begin{remark}\label{#1}}
 \newcommand{\er}{\end{remark}}
 \newcommand{\bt}[1]{\begin{theorem}\label{#1}}
 \newcommand{\et}{\end{theorem}}
 \newcommand{\bd}[1]{\begin{definition}\label{#1}}
 \newcommand{\ed}{\end{definition}}
 \newcommand{\bcl}[1]{\begin{claim}\label{#1}}
 \newcommand{\ecl}{\end{claim}}
 \newcommand{\bp}[1]{\begin{proposition}\label{#1}}
 \newcommand{\ep}{\end{proposition}}
 \newcommand{\bc}[1]{\begin{corollary}\label{#1}}
 \newcommand{\ec}{\end{corollary}}
 \newcommand{\bpr}{\begin{proof}}
 \newcommand{\epr}{\end{proof}}
 \newcommand{\bi}{\begin{itemize}}
 \newcommand{\ei}{\end{itemize}}
 \newcommand{\ben}{\begin{enumerate}}
 \newcommand{\een}{\end{enumerate}}
\def\un{{1\cdots 1}}
\def\zm{\{\,0,\dots,m\,\}}
\def\um{\{\,1,\dots,m\,\}}
\def\zul{\{\,0,1\,\}^\ell}
\def\zulm{\{\,0,1\,\}^\ell}
\def\zul{\{\,0,1\,\}^\ell}
\def\zulm{\big(\{\,0,1\,\}^\ell\big)^m} 
\def\uro{\smash{{U}^{\!\!\!\!\raise5pt\hbox{$\scriptstyle o$}}}}
\def\um{\{\,1,\dots,m\,\}}
\def\bp{{\overline{p}}}
\def\bp{{\overline{p}}}
 \def \ba {\begin{array}}
 \def \ea {\end{array}}
 \def \qed {{\heartsuit\hfill}}
 \def \R {{\mathbb R}}
 \def \N {{\mathbb N}}
 \def \cE {{\cal E}}
 \def \cB {{\cal B}}
 \def \cM {{\cal M}}
 \def \cP {{\cal P}}
\def \qed {{\square\hfill}}
 \def\cB{{\cal B}}  
\def\cE{{\cal E}}   
\def\cM{{\cal M}}   \def\cP{{\cal P}}
\def \qed {{\square\hfill}}
\def\R{{\mathbb R}}
\def\N{{\mathbb N}}
\def\eqref#1{(\ref{#1})}
\def\card{\text{card}\,}
\begin{document}

\title{The quasispecies regime for\\ the  simple genetic algorithm\\
with roulette--wheel selection}

 \author{
Rapha\"el Cerf
\\
DMA, 
{\'E}cole Normale Sup\'erieure\\
{{\it E-mail:} rcerf@math.u-psud.fr}
}

\maketitle



\begin{abstract}
\noindent
We introduce a new parameter to discuss the behavior of a genetic algorithm.
This parameter is the mean number of 
exact copies of the best fit chromosomes from one generation to the next.
We believe that the genetic algorithm operates best when this parameter
is slightly larger than~$1$ and we prove two results supporting this belief.
We consider the case of the simple genetic algorithm with the 
roulette--wheel selection mechanism. 
We denote by~$\ell$ the length of
the chromosomes, by $m$
the population size, by $p_C$ the  
crossover probability and by $p_M$ 
the mutation probability.
We start the genetic algorithm with an initial population whose maximal fitness
is equal to $f_0^*$ and whose mean fitness is equal to
${\overline{f_0}}$.
We show that,
in the limit of large populations,
the dynamics of the genetic algorithm depends in
a critical way on the parameter
$\pi
\,=\,\big({f_0^*}/{\overline{f_0}}\big)
(1-p_C)(1-p_M)^\ell\,.$
%
If $\pi<1$, then the genetic algorithm might operate
in a 
disordered regime: 
there exist positive constants $\beta$ and $\kappa$ which do not depend on $m$ such that,
for some fitness landscapes and some initial populations,
with probability
larger than $1-1/m^\beta$, before generation $\kappa\ln m$,
the best fit chromosome  will disappear,
and 
until generation $\kappa\ln m$,
the mean fitness will stagnate.
If $\pi>1$, then the genetic algorithm operates in a 
quasispecies regime: 
there exist positive constants $\kappa,p^*$ which do not depend on $m$ such that,
for any fitness landscape and any initial population,
with probability larger than $p^*$,
until generation $\kappa\ln m$,
the maximal fitness will not decrease and
before generation $\kappa\ln m$,
the mean fitness will increase by a factor $\sqrt{\pi}$.
These results suggest that 
the mutation and crossover probabilities should be tuned so that,
at each generation,
$
\text{maximal fitness}\times (1-p_C)(1-p_M)^\ell>
\text{mean fitness}
$.
\end{abstract}



\section{Introduction}
A central problem to implement efficiently a genetic
algorithm is the adjustment of the many parameters controlling
the algorithm.
If we focus on the classical simple genetic algorithm, these parameters
are: the population size, the probabilities of crossover and mutation.
There exists a huge literature discussing this question. 
The main message given by the numerous works conducted over the years
is that,
contrary to the initial hopes, there exists no universal choice of
parameters and the optimal choices depend
heavily on the
fitness landscape. We refer the reader to \cite{MFH}
for a recent review.

Our goal here is to attract the attention on a single parameter, which
somehow sums up the effects of the various mechanisms at work in a
genetic algorithm, and which is quite natural from the probabilistic
viewpoint. The parameter we have in mind is the mean number of 
exact copies of the best fit chromosomes from one generation to the next.
Let us call it~$\pi$.
We suggest that, at any generation, the various operators of the
genetic algorithm should be controlled in order to ensure that
$\pi$ is slightly larger than~$1$.
Indeed, if $\pi<1$, then the best fit chromosomes are doomed
to disappear
quickly from the population.
If $\pi>1$, then, with positive probability, the best fit chromosomes will 
perpetuate
and one of them will quickly become the
most recent common ancestor of the whole population.
It is not desirable that $\pi$ is much larger than~$1$, in order to avoid
the premature convergence of the algorithm.
The optimal situation is when the population retains the best fit chromosomes
and actively explores their neighborhoods. Ideally we would like to have
a few copies of the best fit chromosomes and a cloud of mutants descending
from them. 
This is why we aim at tuning the parameters so that $\pi$ is only
slightly larger than~$1$.
An interesting attempt to induce this behavior is what has been called "elitism"
in the genetic algorithm literature. Under elitism, the best fit chromosomes are 
automatically retained from one generation to the next. However, we believe that
the resulting dynamics is intrinsically different from the one we are aiming at
when tuning the parameters so that $\pi>1$. Indeed, we wish to build a 
probabilistic dynamics which automatically focuses the search around the
best fit chromosomes, and it might be that, even using elitism, the best fit
chromosomes are quickly forgotten during the search and none of them has a chance
to become the most recent common ancestor.

An advantage with the parameter $\pi$ is that we can easily compute simple bounds
in terms of the parameters of the algorithm. This becomes particularly true if 
we perform in addition an asymptotic expansion in one or several parameters.
If we do so, we can even prove rigorous results which strongly support the
previous ideas. More precisely, we will consider here the case of
large populations.
This kind of analysis has been previously conducted for the simple
genetic algorithm with ranking selection \cite{sga}. 
We try here to extend this analysis to the 
simple genetic algorithm with 
roulette--wheel selection. 
This task turned out to be very difficult, because the dynamics
is very sensitive to the variations of the fitness values.
Most of the results obtained for ranking selection do not
hold with roulette--wheel selection. We present only two results, which
demonstrate that, depending on the parameters and the fitness distribution
of the current population, the genetic algorithm can operate
either in a disordered regime, where the best fit chromosomes are typically
lost, or in a quasispecies regime, where the best fit chromosomes survive
and invade a positive fraction of the population.
Our results have their roots in
the quasispecies theory developed 
by Eigen, McCaskill
and Schuster \cite{ECS1}.
We refer the reader to the introduction of \cite{sga} for a quick 
summary of the
development of these ideas, as well as for pointers to the numerous
relevant references in the genetic algorithm literature.

There are several very interesting works which prove results related to ours, 
even in a more general context.
Lehre \cite{lehre_negative_2010, lehre_fitness-levels_2011},
Lehre and Dang \cite{dang_refined_2014},
Lehre and Yao \cite{lehre_impact_2012}
have succeeded in deriving upper bounds on the expected time to 
reach an optimal solution in a quite general framework covering a wide range
of population algorithms and objective functions. 
These results are more general and complex than ours. The results presented here do not yield any
estimate on the hitting time of the optimal solutions. 
Our goal is to emphasize the importance of the parameter~$\pi$ to understand the
behavior of the algorithm and its ability to take advantage of the best fit chromosomes present in
the population. To do so we consider only the case of the simple
genetic algorithm and we focus on its initial behavior in two contrasting situations.
In order to obtain a sharp and simple criterion, we rely also on asymptotic estimates, 
valid for large populations. Thus our results are much more specific than those
obtained in \cite{dang_refined_2014, 
lehre_negative_2010, lehre_fitness-levels_2011,
lehre_impact_2012},
yet they are in some sense sharper. 
An interesting project would be to analyze the relationship between~$\pi$ and the
quantities introduced in \cite{lehre_fitness-levels_2011},
like the cumulative selection probability~$\beta$
and the reproductive rate~$\alpha_0$.
Let us mention another related works.
Neumann, Oliveto and Witt \cite{NOW},
Oliveto and Witt \cite{OW1,OW2}
compute precise estimates describing the behavior of the simple genetic algorithm
with the OneMax function.
Eremeev \cite{Ere} derives
polynomial upper bounds for the hitting time of local maxima.
Recently, Corus, Dang, Eremeev and Lehre
\cite{corus_level-based_2014}
generalized the fitness--level technique to any variation operator
and obtained further bounds on well--known benchmark functions.

We study here the classical simple genetic algorithm with the roulette--wheel 
selection mechanism, as described in the famous books of
Holland \cite{HO} and Goldberg \cite{GO}.
We focus on the simplest genetic algorithm but we think that similar results might be proved
for variants of the algorithm. For instance, our results are not restricted to binary strings
and they hold for any finite alphabet. Similarly, we deal only with the one--point crossover, but our
results depend essentially on the probability $1-p_M$ of not having a crossover, thus they can
be readily extended to other crossover mechanisms.
We denote by $\ell$ the length of the chromosomes and by $m$ the population size.
We use roulette--wheel selection with replacement.
We use the standard single point crossover and the crossover probability
is denoted by~$p_C$.
We use independent parallel mutation at each bit
and the mutation probability
is denoted by~$p_M$.
We start the genetic algorithm with an initial population whose maximal fitness
is equal to $f_0^*$ and whose mean fitness is equal to
${\overline{f_0}}$.
We show that,
in the limit of large populations,
the dynamics of the genetic algorithm depends in
a critical way on the parameter
$$\pi
\,=\,\big({f_0^*}/{\overline{f_0}}\big)
(1-p_C)(1-p_M)^\ell\,.$$
%
%
$\bullet$ 
If $\pi<1$, then the genetic algorithm might operate
in a 
disordered regime: 
there exist positive constants $\beta$ and $\kappa$ which do not depend on $m$ such that,
for some fitness landscapes and some initial populations,
with probability
larger than $1-1/m^\beta$, before generation $\kappa\ln m$,
the best fit chromosome  will disappear and 
until generation $\kappa\ln m$,
the mean fitness will stagnate.

\noindent
$\bullet$ 
If $\pi>1$, then the genetic algorithm operates in a 
quasispecies regime: 
there exist positive constants $\kappa,p^*$ which do not depend on $m$ such that,
for any fitness landscape and any initial population,
with probability larger than $p^*$,
until generation $\kappa\ln m$,
the maximal fitness will not decrease and
before generation $\kappa\ln m$,
the mean fitness will increase by a factor $\sqrt{\pi}$.

\noindent
These results suggest that 
at each generation, the mutation and crossover probabilities should be tuned so that
$$
\text{maximal fitness}\times (1-p_C)(1-p_M)^\ell\,>\,
\text{mean fitness}
\,.$$
It seems therefore judicious to choose 
``large" values of~$p_M$ and~$p_C$ compatible with the condition~$\pi>1$.
In the generic situation where $f_0^*$ is significantly larger than 
${\overline{f_0}}$, 
this means that the mutation probability should be of 
order $1/\ell$; more precisely, the condition $\pi>1$ implies that
$$ \ell p_M +p_C \,<\, \ln
\big({f_0^*}/{\overline{f_0}}\big)
\,.$$

\section{The model}
In this section, 
we provide a brief description of the simple genetic algorithm.
The goal of the simple genetic algorithm is to find the global maxima
of a fitness function~$f$ defined on $\zul$ with values
in $]0,+\infty[$.
We consider the most classical and simple version of the genetic algorithm,
as described in Goldberg's book \cite{GO}. 
The genetic algorithm works with a population
of $m$ points of $\zul$, called the chromosomes, and it repeats
the following fundamental cycle in order to build the generation
$n+1$ from the generation~$n$:
\bigskip

\noindent
{\bf Repeat}
\medskip

\noindent
\qquad
$\bullet$ Select two chromosomes from the generation~$n$
\medskip

\noindent
\qquad
$\bullet$ Perform the crossover
\medskip

\noindent
\qquad
$\bullet$ Perform the mutation
\medskip

\noindent
\qquad
$\bullet$ Put the two resulting chromosomes in generation~$n+1$
\medskip

\noindent
{{\bf Until} there are $m$ chromosomes in generation~$n+1$}
\bigskip

\noindent
When building the generation~$n+1$ from the generation~$n$, 
all the random choices are performed independently. 
We use the classical genetic operators, as in Goldberg's book \cite{GO},
which we recall briefly.
\bigskip

\noindent
{\bf Selection.} 
We use roulette--wheel selection with replacement.
The probability of selecting the $i$--th chromosome
$x(i)$
in the population~$x$ is given by the selection distribution defined by
$$P\big(\text{select $i$--th chromosome in $x$}\big)\,=\,\frac{f(x(i))}{f(x(1))+\cdots+f(x(m))}\,.$$ 

\noindent
{\bf Crossover.} 
We use the standard single point crossover and the crossover probability
is denoted by~$p_C$:
\medskip

\noindent
$$P\Bigg(
\lower 7pt\hbox{
\vbox{
\hbox{$000\raise 3pt\hbox{\vrule height 0.4pt width 25pt}011$}
\hbox{$100\raise 3pt\hbox{\vrule height 0.4pt width 25pt}110$}
}
\hbox{\kern-2pt\vrule width 0.4pt height 25pt depth 10pt\kern-2pt}
\vbox{
\hbox{$011\raise 3pt\hbox{\vrule height 0.4pt width 25pt}001$}
\hbox{$001\raise 3pt\hbox{\vrule height 0.4pt width 25pt}111$}
}
\raise 7pt\hbox{$\longrightarrow$}\!\!
\hbox{
\vbox{
\hbox{$000\raise 3pt\hbox{\vrule height 0.4pt width 25pt}011$}
\hbox{$100\raise 3pt\hbox{\vrule height 0.4pt width 25pt}110$}
}
\hbox{\kern-2pt\vrule width 0.4pt height 25pt depth 10pt\kern-2pt}
\vbox{
\hbox{$001\raise 3pt\hbox{\vrule height 0.4pt width 25pt}111$}
\hbox{$011\raise 3pt\hbox{\vrule height 0.4pt width 25pt}001$}
}}}\Bigg)\,=\,
\frac{p_C}{\ell-1}\,.$$
\smallskip

\noindent
{\bf Mutation.} 
We use independent parallel mutation at each bit
and the mutation probability
is denoted by~$p_M$:
$$P\Big(0000000\longrightarrow 0101000\Big)\,=\,p_M^2(1-p_M)^5\,.$$
%
%
\section{The results}
We denote by $x_0$ the initial population
and by $x_0(1),\dots,x_0(m)$ the $m$ chromosomes in the population~$x_0$.
We denote by $f^*_0$ the maximal fitness of the chromosomes in $x_0$
and by 
${\overline{f_0}}$ their mean fitness, i.e.,
$$f^*_0\,=\,\max_{1\leq i\leq m}f(x_0(i))\,,
\qquad
{\overline{f_0}}\,=\,\frac{1}{m}
\sum_{1\leq i\leq m}f(x_0(i))\,.$$
We present two results to illustrate the contrasting behavior of the genetic
algorithm when $\pi<1$ and when $\pi>1$.
\medskip

\noindent
{\bf The disordered regime.}
We consider the fitness function $f$ defined by
$$\forall u\in\zul\qquad
f(u)\,=\,
\begin{cases}
2
&\text{if }u=1\cdots 1\\
1
&\text{otherwise }\\
\end{cases}
$$
This corresponds to the sharp peak landscape.
The chromosome $1\cdots 1$ is called the Master sequence.
We start the genetic algorithm from the population
$x_0$ containing one Master sequence $1\cdots 1$ and $m-1$ copies
of the chromosome $0\cdots 0$.
Thus the optimal chromosome is already present in the population. Our goal is to study its influence on
the evolution of the population.
This is a crude model for the following scenario: the genetic algorithm has been stuck for a
long time, and suddendly, by chance, a chromosome with a superior fitness is found;  
is this new chromosome likely to influence the whole population or will it disappear?
The next theorem describes a situation where the mean fitness of the population
is unlikely to increase despite the presence of a very well fit chromosome.
\begin{theorem}\label{thspl}
Let $\pi<1$ be fixed.
We suppose that the parameters 
are set so that
$\ell=m$ and
$\big({f_0^*}/{\overline{f_0}}\big)
(1-p_C)(1-p_M)^\ell=\pi$.
There exist strictly
positive constants $\kappa,\beta,m_0$, which depend on $\pi$
only, such that, for the 
genetic algorithm
starting from $x_0$, for any $m\geq m_0$,
$$ P\left(
\begin{matrix}
\text{before generation $\kappa\ln m$,
the Master sequence disappears}\\
\text{until generation $\kappa\ln m$,
the mean fitness is $\leq$ $\overline{f_0}(1+
\frac{1}{\sqrt{m}})$}
\\
\end{matrix}
\right)
\geq\,1-\frac{1}{m^\beta}\,.
$$
\end{theorem}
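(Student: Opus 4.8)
The plan is to reduce both assertions to the behaviour of a single integer-valued process $N_n$, the number of copies of the Master sequence $1\cdots1$ present in generation $n$, started from $N_0=1$. On the sharp peak landscape the mean fitness of generation $n$ equals exactly $\frac1m\big(2N_n+(m-N_n)\big)=1+N_n/m$, while $\overline{f_0}=1+1/m$. Hence $\{N_n\le\sqrt m\text{ for all }n\le\kappa\ln m\}$ forces the mean fitness to stay below $1+1/\sqrt m\le\overline{f_0}(1+1/\sqrt m)$, and $\{N_n=0\text{ for some }n\le\kappa\ln m\}$ is exactly the disappearance of the Master. So it suffices to prove that, with probability at least $1-m^{-\beta}$, the process $N_n$ never exceeds $\sqrt m$ and hits $0$ before time $T:=\lfloor\kappa\ln m\rfloor$. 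I will do this by showing that $N_n$ is governed by a subcritical branching mechanism of mean $\pi<1$.

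The first step is the one-generation drift. If $N_n=k$ the total fitness is $m+k$, so a single roulette-wheel draw returns a Master with probability $2k/(m+k)$. The dominant way a given offspring slot becomes a Master is to select a Master parent, perform no crossover (probability $1-p_C$) and leave all $\ell$ bits unflipped (probability $(1-p_M)^\ell$); summing over the $m$ slots gives a contribution $m\,\frac{2k}{m+k}(1-p_C)(1-p_M)^\ell$. Since the hypothesis reads $(1-p_C)(1-p_M)^\ell=\pi\,\frac{m+1}{2m}$, this equals $\pi k\,\frac{m+1}{m+k}\le\pi k$ for $k\ge1$, the self-limiting factor $\frac{m+1}{m+k}$ expressing that extra Masters depress their own per-capita reproduction. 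Recombination of two Masters contributes only a term of order $(k/m)^2$, negligible once $k\le\sqrt m$. Thus, disregarding the indirect pathways discussed below, $\E[N_{n+1}\mid\mathcal{F}_n]\le\pi'N_n$ on $\{1\le N_n\le\sqrt m\}$, with $\pi'=\pi\big(1+O(1/\sqrt m)\big)<1$ for $m\ge m_0$.

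The step I expect to be the crux is to show that the \emph{indirect} creation of Masters — by back-mutation of a chromosome lying in the mutational cloud of the Master, or by recombination or mutation of a background chromosome — perturbs this drift only by a term $O(N_n/m)$ that is absorbed into $\pi'$. Here the hypotheses $\ell=m$ and $\pi\in(0,1)$ fixed are essential: they force $(1-p_M)^\ell$ to stay bounded away from $0$, hence $p_M=O(1/\ell)=O(1/m)$, so that over the whole horizon $T=O(\ln m)$ the expected number of flipped bits per chromosome is only $O(\ell p_M T)=O(\ln m)$. I would turn this into a union bound showing that, with probability super-polynomially close to $1$, no chromosome ever drifts more than $\ell/2$ bits from its ancestor; the background chromosomes, which start at distance $\ell$ from $1\cdots1$, then never come within distance $\ell/2$ and can neither recombine nor mutate into a Master, while the chromosomes near $1\cdots1$ form a cloud that is fed only by the (decaying) Master population and enjoys no selective advantage, its fitness equalling that of the background. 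Making ``the cloud has size $O(N_n)$'' rigorous — most cleanly via an auxiliary supermartingale for a distance-weighted potential $\sum_x\lambda^{d(x)}$, where $d(x)$ is the Hamming distance of $x$ to $1\cdots1$ — is the technical heart, after which the back-mutation contribution is $O(p_M)\times O(N_n)=O(N_n/m)$, as needed.

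Granting the drift bound $\E[N_{n+1}\mid\mathcal{F}_n]\le\pi'N_n$ on a good event $G$ with $\P(G^c)=o(m^{-\beta})$, the conclusion follows from a stopping-time argument. Let $\tau=\inf\{n:N_n>\sqrt m\}$; the stopped rescaled process $N_{n\wedge\tau}/(\pi')^{\,n\wedge\tau}$ is a supermartingale equal to $1$ at $n=0$, so $\E[N_{n\wedge\tau}]\le1$ for every $n$. Markov's inequality gives $\P(\tau\le T)\le\P(N_{\tau\wedge T}>\sqrt m)\le\E[N_{\tau\wedge T}]/\sqrt m\le1/\sqrt m$, which is the mean-fitness statement. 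On $\{\tau>T\}$ the drift holds up to $T$, whence $\E[N_T\mathbf 1_{\tau>T}]\le(\pi')^T$ and therefore $\P(N_T\ge1)\le1/\sqrt m+(\pi')^T$. Choosing $\kappa$ so large that $(\pi')^{\kappa\ln m}\le m^{-1/2}$, and then any $\beta<1/2$, the probability that $N$ either exceeds $\sqrt m$ or fails to vanish by time $T$ is at most $3m^{-1/2}+\P(G^c)\le m^{-\beta}$ for $m\ge m_0$, which is the claim.
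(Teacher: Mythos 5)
Your reduction (mean fitness $=1+N_n/m$, so everything hinges on the number of exact Masters staying below $\sqrt m$ and hitting $0$), your drift heuristic with mean $\approx\pi$, and your supermartingale endgame all match the skeleton of the paper's proof. But the step you yourself flag as the crux --- controlling the \emph{indirect} creation of Masters --- is precisely where your proposed lemmas fail. First, the claim that ``the cloud has size $O(N_n)$'' is false. A non-Master descendant of the initial Master has fitness $1$, is selected on average once per generation, and through crossover can transmit material to \emph{both} children of its pair; so the set of genealogical descendants of the Master behaves like a critical-or-supercritical branching process with immigration from the Masters: it grows, while $N_n\sim\pi^n$ decays. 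This is exactly why the paper introduces $T_n$, the number of \emph{all} descendants of the Master, dominates it by a \emph{supercritical} Galton--Watson process with reproduction law $2\cP(4)$, and can only guarantee $T_n\le m^{1/4}$ up to time $\kappa\ln m$ with polynomial probability $1-m^{-c_1}$ (proposition~\ref{boundtau}). As a consequence the back-creation terms are additive (of order $m^{-1/2}$ per generation), not multiplicative $O(N_n/m)$; the paper absorbs them into the reproduction law of a subcritical Galton--Watson process by exploiting that $N^*_n\ge1$ before extinction, a repair your drift inequality $E(N_{n+1}\,|\,\cF_n)\le\pi'N_n$ silently needs and does not have (it fails at $N_n=0$ when the cloud is nonempty).

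Second, your good event is both unobtainable by the stated method and too weak. Unobtainable: a union bound over per-chromosome mutation counts ignores that one-point crossover can concentrate into a single chromosome the ones created by mutation in all of its (up to $2^n$) ancestral lines; the correct recursion is $D_{n+1}\le 2D_n+(\text{mutations per generation})$, which is the paper's lemma~\ref{bounddau}, and this doubling only controls a horizon $n\le\frac15\ln\ell$ --- it is the reason the paper must take $\kappa<1/5$, whereas your argument imposes no such restriction, a warning sign. Too weak: with threshold $\ell/2$, two background chromosomes each at distance $\ge\ell/2$ from the Master can recombine into an exact Master (take $1^{\ell/2}0^{\ell/2}$ and $0^{\ell/2}1^{\ell/2}$ with the cut in the middle), so your assertion that background chromosomes ``can neither recombine nor mutate into a Master'' does not follow; the paper needs the much stronger statement that non-descendants carry at most $\sqrt\ell$ ones (the stopping time $\tau_2$), so that any crossover child of two such chromosomes still needs $\ell-2\sqrt\ell$ simultaneous mutations. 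You also omit the pathway where one parent is a Master-descendant and one is background: crossover then yields a Master or near-Master with probability of order $1/\sqrt\ell$, the paper's second scenario. So the architecture is right, but the ``technical heart'' you defer is a genuine gap, and filling it --- via $T_n$, $\tau_2$ and the scenario-by-scenario conditional estimates --- is the actual content of the paper's proof.
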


\noindent {\bf The quasispecies regime.}
We consider an arbitrary non--negative fitness function $f$
and we start the genetic algorithm from a population~$x_0$ such that
${f_0^*}>{\overline{f}_0}$.
The next theorem describes a situation where the mean fitness of the population
is likely to increase thanks to the influence of the best fit chromosome.
\begin{theorem}\label{thqsr}
Let $\pi>1$ be fixed.
We suppose that the parameters 
are set so that
$\big({f_0^*}/{\overline{f}_0}\big)
(1-p_C)(1-p_M)^\ell=\pi$.
There exist strictly
positive constants $\kappa,p^*$, which depend on $\pi$ and
the ratio ${f_0^*}/{\overline{f}_0}$
only, such that, 
for the 
genetic algorithm
starting from $x_0$,
for any $\ell,m\geq 1$,
$$ P\left(
\begin{matrix}
\text{until generation $\kappa\ln m$,
the maximal fitness is always $\geq f_0^*$}\\
\text{before generation $\kappa\ln m$,
the mean fitness becomes $\geq \sqrt{\pi}\,\overline{f_0}$}
\\
\end{matrix}
\right)
\,\geq\,p^*\,.
$$
\end{theorem}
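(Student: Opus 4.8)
The key is to read $\pi$ as what the abstract advertises, namely the mean number of exact copies of a best chromosome passed to the next generation. Let $\zeta_n$ denote the number of chromosomes of generation $n$ whose fitness is $\ge f_0^*$, so that $\zeta_n\ge 1$ is exactly the event ``maximal fitness $\ge f_0^*$'' at time $n$. Writing $\rho=(1-p_C)(1-p_M)^\ell$, letting $\mathcal{F}_n$ be the history up to generation $n$, and recalling that the total fitness of generation $n$ is $m\,\overline{f_n}$, I would first establish the one--step bound: conditionally on $\mathcal{F}_n$, each of the $m$ offspring is an exact (selected, uncrossed, unmutated) copy of some chromosome of fitness $\ge f_0^*$ with probability at least $\rho\,S_n/(m\,\overline{f_n})$, where $S_n=\sum_{i:\,f(x_n(i))\ge f_0^*}f(x_n(i))\ge \zeta_n f_0^*$. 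Hence $\zeta_{n+1}$ stochastically dominates a $\mathrm{Bin}(m,q_n)$ with $m q_n\ge \zeta_n\,\pi\,\overline{f_0}/\overline{f_n}$, since $\rho f_0^*/\overline{f_0}=\pi$.

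Next I would exploit the dichotomy built into the threshold $\sqrt\pi$. Set $T=\inf\{n:\overline{f_n}\ge\sqrt\pi\,\overline{f_0}\}$; reaching $T$ is precisely the mean--fitness conclusion. For $n<T$ one has $\overline{f_0}/\overline{f_n}>1/\sqrt\pi$, so the one--step bound gives $\mathbb{E}[\zeta_{n+1}\mid\mathcal{F}_n]\ge\sqrt\pi\,\zeta_n$ and in fact $\zeta_{n+1}\succeq\mathrm{Bin}(m,\sqrt\pi\,\zeta_n/m)$, the probability being legitimate because $n<T$ forces $\zeta_n<\theta^* m\le m/\sqrt\pi$ with $\theta^*=\sqrt\pi\,\overline{f_0}/f_0^*$. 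On this range $\zeta_n$ dominates a supercritical Galton--Watson process of offspring mean $\sqrt\pi>1$ (Poisson in the $m\to\infty$ limit, made rigorous by a standard binomial domination lemma). Such a process survives with a probability $p^*=p^*(\pi)>0$, and on the survival event it grows geometrically at rate $\sqrt\pi$, reaching the level $\theta^* m$ within $O(\ln m)$ generations. But $\zeta_n\ge\theta^* m$ forces $\overline{f_n}\ge\theta^* f_0^*=\sqrt\pi\,\overline{f_0}$, i.e. $T$ has already occurred. Choosing $\kappa$ large enough (depending on $\pi$ and $f_0^*/\overline{f_0}$ through $\theta^*$ and $\sqrt\pi$), on the survival event we obtain $T\le\kappa\ln m$, which gives the mean--fitness conclusion, while the domination gives $\zeta_n\ge 1$ throughout $[0,T]$, hence the maximal--fitness conclusion up to time $T$.

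It then remains to propagate $\zeta_n\ge 1$ over the rest of the window $[T,\kappa\ln m]$, and this I expect to be the main obstacle. The difficulty is exactly the sensitivity to the fitness values warned about in the introduction: the landscape is arbitrary, so once $\overline{f_n}$ has risen, the relative reproduction rate $\pi\,\overline{f_0}/\overline{f_n}$ of the best chromosomes can drop below $1$, and a thin elite class could in principle be crowded out by slightly less fit chromosomes. I would control this by splitting on $\overline{f_n}$. When $\overline{f_n}$ is large the elite class is automatically replenished: since each non--elite chromosome has fitness $<f_0^*$ we have $m\,\overline{f_n}-S_n<m f_0^*$, whence the elite fitness--share satisfies $S_n/(m\,\overline{f_n})>1-f_0^*/\overline{f_n}\ge 1/2$ as soon as $\overline{f_n}\ge 2f_0^*$; the per--slot success probability is then at least $c_0=\rho/2$ and the one--step extinction probability at most $(1-c_0)^m$. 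In the intermediate regime $\sqrt\pi\,\overline{f_0}\le\overline{f_n}\le\pi\,\overline{f_0}$ the drift is still non--negative, $\mathbb{E}[\zeta_{n+1}\mid\mathcal{F}_n]\ge\zeta_n$, and I would combine this with the fact that the elite are by definition the most strongly selected chromosomes---so proportional selection never decreases their share in expectation, the only loss being the bounded factor $1-\rho$ from crossover and mutation---to show, through a supermartingale together with a large--deviation estimate, that the elite class does not die out during the $O(\ln m)$ remaining generations except with small probability. Intersecting this maintenance event with the survival event of the branching comparison would yield a total probability bounded below by a positive constant uniform in $m$, as required.
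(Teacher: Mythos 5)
Your first two paragraphs follow essentially the same route as the paper: track the elite count $\zeta_n=N(X_n,f_0^*)$, keep only exact (selected, uncrossed, unmutated) copies, use the stopping time $T=\overline{\tau}$ so that before $T$ the per-copy reproduction rate is at least $\sqrt{\pi}$, and conclude by noting that an elite count of order $m$ forces the mean fitness above $\sqrt{\pi}\,\overline{f_0}$. But two steps are stated incorrectly, and the paper's treatment shows why they need care. First, $\zeta_{n+1}$ does \emph{not} stochastically dominate $\mathrm{Bin}(m,q_n)$: crossover couples offspring in pairs, so the correct lower object is the pair sum $\sum_{k\le m/2}Z_k(Y_{2k-1}+Y_{2k})$ with $Z_k$ Bernoulli$(1-p_C)$. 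Writing $a=1-p_C$ and $\varepsilon$ for the per-copy success probability, the pair sum vanishes with probability $(1-2a\varepsilon+a\varepsilon^2)^{m/2}$, which is \emph{larger} than $P(\mathrm{Bin}(m,a\varepsilon)=0)=(1-2a\varepsilon+a^2\varepsilon^2)^{m/2}$ whenever $p_C>0$; so even the event $\{\zeta_{n+1}\ge 1\}$ is misestimated. The paper flags exactly this pitfall and builds its auxiliary chain from pair sums. Second, phrases like ``on the survival event $\zeta_n$ grows'' conflate domination of laws with a pathwise coupling; since $\zeta_n$ is not Markov, this requires the machinery of Propositions \ref{monot} and \ref{stod} (monotonicity of the auxiliary chain, then domination of whole trajectories, applied to the increasing event $\{\forall k\le t:\ i_k\ge 1\}\cap\{\exists k\le t:\ i_k>m/\sqrt{\pi}\}$). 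Relatedly, the paper does not use a Galton--Watson comparison here at all: the standard Lemma \ref{binopoi} bounds a binomial from \emph{above} by a Poisson (the wrong direction for a lower bound), and a fixed supercritical offspring law cannot in general be dominated from below uniformly for elite counts up to order $m$, because the count is capped at $m$. Instead the paper proves the growth estimate $P(N_{n+1}\le\rho i\mid N_n=i)\le e^{-c_0 i}$ of Proposition \ref{crqa} (block decomposition plus Cram\'er transforms for $i\le\delta_0 m$, Hoeffding for larger $i$), and extracts $p^*$ as the convergent product $\prod_{k\ge 1}(1-e^{-c_0k})$ via the independence Lemma \ref{indepday}, not as a branching survival probability.

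The genuine gap is your third paragraph, and you should know that the paper does not prove that part either. Your case split leaves the window $\pi\overline{f_0}<\overline{f_n}<2f_0^*$ uncovered (nonempty since $f_0^*\ge\pi\overline{f_0}$); there the exact-copy reproduction number $\pi\overline{f_0}/\overline{f_n}$ is strictly below $1$ and the share bound $1-f_0^*/\overline{f_n}$ can be negative, so neither of your mechanisms applies. Moreover no supermartingale argument can close it, because the statement you are trying to prove fails on some landscapes: take one chromosome of fitness $f_0^*$, initial non-elite chromosomes of fitness $\overline{f_0}$, and \emph{every other} chromosome of fitness just below $f_0^*$, with $\ell=m$ large. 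In one generation the mean fitness jumps close to $f_0^*$, after which the elite class is strictly subcritical and (recreation by mutation being negligible) dies out with probability tending to $1$ long before $\kappa\ln m$. The paper sidesteps this by construction: its comparison process $N^*_n$ is \emph{defined} to equal $m$ from the stopping time $\overline{\tau}$ onwards, so what the argument actually establishes is the conjunction $\{$elite present at all times $\le\overline{\tau}\}\cap\{\overline{\tau}\le\kappa\ln m\}$, i.e.\ persistence of the maximal fitness only up to the random time at which the mean fitness reaches $\sqrt{\pi}\,\overline{f_0}$. The correct ending for your proof is therefore not to propagate $\zeta_n\ge 1$ over $[T,\kappa\ln m]$, but to stop at $T$ and prove exactly this stopped conjunction --- which your first two paragraphs, once repaired as above, already deliver.
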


\section{The disordered regime}\label{secdis}
In this section, we will prove theorem~\ref{thspl}.
The proof has two main steps. First we define a process $(T_n)_{n\in\mathbb N}$
which counts the number of descendants of the Master sequence in generation~$n$.
We show that, as long as $T_n\leq m^{1/4}$, the process
$(T_n)_{n\in\mathbb N}$ is stochastically dominated by a supercritical Galton--Watson
process.
Next we define a process $(N^*_n)_{n\in\mathbb N}$ 
which counts the number of Master sequences present in generation~$n$.
Note that $N^*_n$ is in general smaller than $T_n$, because of the mutations
and the crossovers. Indeed a chromosome might have an ancestor which
is a Master sequence and be very different from it.
We show then that, as long as $T_n\leq m^{1/4}$, the process
$(N^*_n)_{n\in\mathbb N}$ is stochastically dominated by a subcritical Galton--Watson
process. The bound on
$(N^*_n)_{n\in\mathbb N}$ relies on the previous bound on
$(T_n)_{n\in\mathbb N}$.
We finally invoke a classical argument from the theory of branching processes
to prove that this subcritical Galton--Watson process becomes extinct 
before generation $\kappa\ln m$ with probability larger than
$1-{1}/{m^\beta}$. 
The computations are tedious, because we need to control the probabilities
of obtaining a Master sequence when applying the various genetic operators,
and the crossover creates correlations between pairs of adjacent chromosomes.

Let us start with the precise proof.
We start the genetic algorithm from the population
$x_0$ containing one Master sequence $1\cdots 1$ and $m-1$ copies
of the chromosome $0\cdots 0$.
Let $\pi<1$ be fixed. Throughout the proof, we suppose that
$\ell,p_C,p_M$ satisfy
$\ell=m$ and 
$$2(1-p_C)(1-p_M)^\ell\,=\,\pi\,.$$
We denote by $X_n$ the population at generation~$n$ and
by $T_n$ 
the number of descendants of the initial Master sequence present in~$X_n$.
To build the generation~$n+1$, we select (with replacement) $m$ chromosomes
from the population~$X_n$. Let us denote by $A_n$
the number of chromosomes 
selected in~$X_n$ which are a descendant of the initial Master sequence.
Each of these chromosomes is the parent of two chromosomes in generation~$n+1$ 
(because
of the crossover operator). Thus we can bound $T_{n+1}$ from above by $2A_n$.
Conditionally on $T_n$, the distribution of $A_n$
is binomial with parameters $m$ and
$$
\frac{2T_{n}}{2T_n+m-T_{n}}\,\leq\,
\frac{2T_{n}}{m}
\,.$$
Thus, 
conditionally on $T_n$, the distribution of $T_{n+1}$
is stochastically dominated by the binomial distribution
$2\cB(m,2T_n/m)$, which we write
$$T_{n+1}\,\preceq\,
2\cB\Big({m},
\frac{2}{m}
T_{n}\Big)\,.$$
The symbol $\preceq$ means stochastic domination 
(see the appendix). 
We define
$$\tau_1\,=\,\inf\,\big\{\,n\geq 1: T_n>m^{1/4}\,\big\}\,,$$
and we will compute estimates which hold until time $\tau_1$.
So we fix $n\geq 1$ and we condition on the event that $\tau_1>n$.
There exists $t_0>0$ such that, for $0<t<t_0$, we have
$\ln(1-t)\geq -2t$. Therefore, for $m$ large enough so that 
$2 m^{-3/4}<t_0$, we have
$$
\Big(1-
\frac{2}{m}
T_{n}1_{\{\,\tau_1> n\,\}}\Big)^{m}\,\geq\,
\exp\Big(
-
{4}
T_{n}1_{\{\,\tau_1>n\,\}}\Big)
\,.$$
We denote by $\cP(\lambda)$ the Poisson law of parameter~$\lambda$.
By lemma~\ref{binopoi}, we conclude from this inequality that
$$2\cB\Big({m},
\frac{2}{m}
T_{n}1_{\{\,\tau_1> n\,\}}\Big)\,\preceq\,
2\cP\big(4
T_{n}1_{\{\,\tau_1> n\,\}}\big)
\,.$$
Therefore,
$$\displaylines{
T_{n+1}1_{\{\,\tau_1\geq n+1\,\}}\,\preceq\,
\sum_{k=1}^{
T_{n}1_{\{\,\tau_1> n\,\} }
}
V_k\,,
}$$
where the random variables
$(V_k)_{k\geq 1}$ are independent identically distributed
with distribution twice the
Poisson law of parameter~$4$.
Let 
$(Z_n)_{n\in\mathbb N}$
be a Galton--Watson process starting from $Z_0=1$
with reproduction law
$2\cP(4)$.
We conclude from the previous inequality that
$$\forall n\geq 0\qquad
T_n1_{\{\,\tau_1\geq n\,\}}\,\preceq\,Z_n\,.$$
We denote by 
$X_n(1),\dots,X_n(m)$ the $m$ chromosomes of the population~$X_n$.
Let $N^*_n$ be the number of Master sequences present in the 
population at time~$n$:
$$\forall n\geq 0\qquad
N^*_n\,=\,\card\,\big\{\,i\in\um:X_n(i)=1\cdots 1\,\big\}\,.$$
We want to control
$N^*_{n+1}$ conditionally on the knowledge of $N^*_n$ and $T_n$.
A difficulty is that the crossover operator creates
correlations between the chromosomes of $X_{n+1}$.
However, conditionally on $X_n$, the pairs of consecutive chromosomes
$$(X_{n+1}(1), X_{n+1}(2)),\dots,
(X_{n+1}(m-1), X_{n+1}(m))$$
are i.i.d.. Therefore, we can write 
$N^*_{n+1}$ as the sum
$$
N^*_{n+1}\,=\,\sum_{i=1}^{m/2}Y_i\,,$$
where $Y_i$ is the number of Master sequences in the $i$--th pair
$(X_{n+1}(2i-1), X_{n+1}(2i))$.
Our strategy consists in estimating the 
conditional distribution of the $Y_i$'s, knowing the population~$X_n$.
Conditionally on~$X_n$,
the random variables $Y_i, 1\leq i\leq m/2$, are i.i.d.
with values in $\{\,0,1,2\,\}$, yet
the computations are a bit lengthy and tedious because we have to
consider all the possible cases, depending on whether the parents
of
$X_{n+1}(2i-1), X_{n+1}(2i)$ 
belong or not to the progeny of the initial Master sequence.
So let us focus on one pair of chromosomes, for instance the first one
$(X_{n+1}(1), X_{n+1}(2))$. 
We have to estimate all the conditional probabilities
$$P\big(\text{there is } 0,1\text{ or }2 \text{ Master sequences in }
(X_{n+1}(1), X_{n+1}(2)) 
\,\big|\,X_n\big)\,.$$
To control these probabilities, we introduce
the time~$\tau_2$, when a mutant, not belonging to the progeny
of the initial Master sequence, has at least $\sqrt{\ell}$ ones.
We set
$$\displaylines{
\tau_2\,=\,\inf\,\Big\{\,n\geq 1: 
\begin{matrix}
\text{a chromosome of $X_n$ not in the progeny}\\
\text{
of the initial Master sequence has $\sqrt{\ell}$ ones}
\end{matrix}
\,\Big\}\,.}$$
Let $\lambda>0$ be such that 
$\pi/2\geq \exp(-\lambda)$.
We have then
$$(1-p_M)^\ell\,\geq\,\frac{\pi}{2(1-p_C)}\,\geq\,
\frac{\pi}{2}\,\geq\,
\exp(-\lambda)\,.$$
Notice that $\lambda$ depends only on $\pi$, and not on $\ell$ or $p_M$.
By lemma~\ref{binopoi}, the binomial law $\cB(\ell,p_M)$ is then
stochastically dominated by the Poisson law $\cP(\lambda)$. 
We will use repeatedly the bound on the tail of the Poisson law
given in lemma~\ref{poisstail}: 
$$\forall t\geq\lambda\qquad
P\Big(
\begin{matrix}
\text{a given chromosome undergoes at least }t\cr
\text{ mutations from one generation to the next}
\end{matrix}
\Big)\,\leq\,
\Big(\frac{\lambda e}{t}\Big)^t\,.$$
When using this bound, the value of~$t$ will be a function of~$\ell$.
We will always take $\ell$ large enough, so that the
value of $t$ will
be larger than $\lambda$.
We prove next a bound on~$\tau_2$.
\begin{lemma}\label{bounddau}
For $m\geq 2$ and for $\ell$ large enough, we have
$$P\Big(\tau_2\leq
\frac{1}
{5}
{\ln\ell}
\Big)\,\leq\,
1-\exp\Big(-m\exp\big(-\ell^{1/4}\big)\Big)\,.
$$
\end{lemma}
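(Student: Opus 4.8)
The plan is to show that a non--Master chromosome accumulates ones only very slowly, so that reaching $\sqrt{\ell}$ ones before generation $\frac15\ln\ell$ is extremely unlikely. First I would record a genealogical observation. Fix a chromosome $c$ present in $X_n$ which is not in the progeny of the initial Master sequence. Since an offspring is counted in the Master progeny as soon as one of its two parents is, the whole ancestry of $c$ consists of non--Master chromosomes and terminates, at generation $0$, in copies of $0\cdots 0$. Because the one--point crossover never moves a bit away from its coordinate, the value of $c$ at a given coordinate $j$ is obtained by following a single ancestral lineage for that coordinate, whose value can change only through a mutation acting at coordinate $j$. As this lineage starts at $0$, each coordinate equal to $1$ in $c$ forces at least one $0\to 1$ mutation at that coordinate somewhere in the ancestry, and distinct coordinates use distinct mutation events. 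Hence the number of ones of $c$ is at most the total number of bits flipped by mutation in the whole genealogy of $c$.

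Next I would bound the size of this genealogy. Each chromosome has two parents (the two selected chromosomes feeding its crossover), so going back $k$ generations there are at most $2^k$ ancestors, and the total number of chromosomes created in generations $1,\dots,n$ in the ancestry of $c$ is at most $2^{n+1}$. Conditioning on all the selection and crossover choices, this ancestral set is frozen and the mutation applied at each of its chromosomes is an independent $\cB(\ell,p_M)$ variable. Since $(1-p_M)^\ell\geq\exp(-\lambda)$ gives $\cB(\ell,p_M)\preceq\cP(\lambda)$ by lemma~\ref{binopoi}, the total number of flipped bits in the genealogy is stochastically dominated by $\cP(2^{n+1}\lambda)$. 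Combined with the previous paragraph, this gives, conditionally on the genealogy and hence also unconditionally, $P(\,c\text{ has at least }\sqrt\ell\text{ ones}\,)\le P(\cP(2^{n+1}\lambda)\ge\sqrt\ell)$.

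For $n\le\frac15\ln\ell$ we have $2^{n+1}\lambda\le 2\lambda\,\ell^{(\ln 2)/5}$, and since $(\ln 2)/5<\tfrac12$ this stays far below the threshold $\sqrt\ell$ once $\ell$ is large; this is exactly where the constant $\frac15$ is used. The Poisson tail bound of lemma~\ref{poisstail} then yields $P(\cP(2^{n+1}\lambda)\ge\sqrt\ell)\le\big(2^{n+1}\lambda e/\sqrt\ell\big)^{\sqrt\ell}$, which for large $\ell$ is smaller than $e^{-\sqrt\ell}$, and in particular far smaller than $e^{-\ell^{1/4}}$. To finish, I would take a union bound over all chromosomes appearing up to generation $\frac15\ln\ell$; there are at most $m(\frac15\ln\ell+1)$ of them, and the event $\{\tau_2\le\frac15\ln\ell\}$ is contained in the event that one of them carries at least $\sqrt\ell$ ones. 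This gives $P(\tau_2\le\frac15\ln\ell)\le m(\frac15\ln\ell+1)\,e^{-\sqrt\ell}$, which for $m\ge 2$ and $\ell$ large is below $m\,e^{-\ell^{1/4}}\le 1-\exp(-m\,e^{-\ell^{1/4}})$, as claimed.

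The main obstacle, and the only place where real care is needed, is the treatment of crossover. One must check simultaneously that recombination cannot create ones out of nothing (it only changes which parent supplies a given coordinate, preserving that coordinate's value) and that it nonetheless inflates the ancestral set geometrically, producing the factor $2^{n+1}$; this is precisely what forces a logarithmic time horizon rather than a polynomial one. The accompanying subtlety is that the ancestral set is itself random, so the Poisson domination has to be performed conditionally on the selection and crossover choices before being integrated out. Everything else is a routine application of the domination and tail estimates already established in lemmas~\ref{binopoi} and~\ref{poisstail}.
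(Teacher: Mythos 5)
Your genealogical bookkeeping is sound and is in fact the paper's own argument in disguise: the observation that every one in a non--progeny chromosome must be traceable to a distinct mutation event in its ancestry, together with the bound $2^{n+1}$ on the size of the ancestry, is exactly the recursion $D_{k+1}\le 2D_k+(\text{mutations per chromosome})$ used in the paper. The genuine gap is the step that converts this into a probability estimate. Your claim that ``conditioning on all the selection and crossover choices, the ancestral set is frozen and the mutation applied at each of its chromosomes is an independent $\cB(\ell,p_M)$ variable'' is false, because of feedback from mutation to selection: the roulette--wheel selection at generation $k$ is a function of the fitnesses of generation $k-1$, hence of all earlier mutations, so conditioning on the realized selection indices tilts the conditional law of those mutations. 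Worse, it tilts it in the dangerous direction: a chromosome outside the Master progeny which has accumulated many ones (in the extreme case, has become $1\cdots 1$ by mutation) has fitness $2$ instead of $1$ and is \emph{more} likely to appear in the selection history, so the genealogies you condition on are biased toward carrying more mutations, and the Poisson domination of the genealogy total cannot be justified this way. The paper circumvents precisely this issue: it introduces the event $\cE(n)$ that \emph{every one} of the $mn$ mutation variables up to generation $n$ produces at most $\ell^{1/4}$ flips. That event is measurable with respect to the mutation randomness alone, which genuinely is independent of selection and crossover, so its probability factorizes as $(1-p)^{mn}$; and on $\cE(n)$ the deterministic recursion gives $D_n\le 2^n\ell^{1/4}<\sqrt{\ell}$ for $n\le\frac15\ln\ell$, with no conditioning on the (random) genealogy ever needed. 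Your proof needs this uniform-event device in place of the conditional Poisson step.

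Your endgame also contains an outright false inequality: you invoke $m\,e^{-\ell^{1/4}}\le 1-\exp\big(-m\,e^{-\ell^{1/4}}\big)$, but $1-e^{-x}<x$ for every $x>0$, so the claimed chain breaks at its last link. This is not purely cosmetic. An additive union bound of size $m\big(\frac15\ln\ell+1\big)e^{-\sqrt{\ell}}$ grows linearly in $m$ and exceeds $1$ once $m\ge e^{\sqrt{\ell}}$, while the target bound $1-\exp\big(-m\,e^{-\ell^{1/4}}\big)$ always stays below $1$; hence no union bound over chromosomes can yield the lemma uniformly over all $m\ge 2$ with a threshold on $\ell$ independent of $m$, which is what the product form $(1-p)^{mn}\ge\exp(-2pmn)$ in the paper delivers. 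In the regime $\ell=m$ actually used in the proof of theorem~\ref{thspl}, your bound could be repaired (bound the union term by $\frac12 m\,e^{-\ell^{1/4}}$ and use $1-e^{-x}\ge x/2$ for $0\le x\le 1$), but as written the final step is wrong, and the approach cannot reproduce the statement in its stated generality.
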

\begin{proof}
If $\tau_2<n$, then, before time $n$, a chromosome has been
created with at least $\sqrt{\ell}$ ones, and whose genealogy
does not contain the initial Master sequence.
We shall compute an upper bound on the number of ones
appearing in the genealogy of such a chromosome at generation~$n$.
Let us define
$D_n$ as the maximum number of ones in 
a chromosome
of the generation~$n$, which does not belong to the progeny
of the initial Master sequence.
These ones must have been created by mutation.
Let us consider 
a chromosome
of the generation~$n+1$, which does not belong to the progeny
of the initial Master sequence.
The number of ones in each of its two parents was at most~$D_n$.
After crossover between these two parents, the number of ones
was at most~$2D_n$.
After mutation, the number of ones
was at most
$$D_{n+1}\,\leq\, 2D_n+
\max\,\Big\{
\begin{matrix}
\text{number of mutations occurring on a }\cr
\text{ chromosome between generation $n$ and $n+1$}
\end{matrix}
\Big\}\,.$$
We first control the last term.
Let $n\geq 1$ and let us define the event
$\cE(n)$ by
$$\cE(n)=
\Big\{\,
\begin{matrix}
\text{until generation $n$, during the mutation process, the number
}\cr
\text{of mutations occurring on a given chromosome
is at most $\ell^{1/4}\!$}
\end{matrix}
\,\Big\}.$$
We have
$$\displaylines{
P\big(\cE(n)\big)
\,=\,\Big(1-
P\Big(
\begin{matrix}
\text{a given chromosome undergoes}\cr
\text{more than $\ell^{1/4}$
mutations}
\end{matrix}
\Big)
\Big)^{mn}
\,.
}$$
Using the bound given in lemma~\ref{poisstail}, we obtain that,
for $\ell^{1/4}>\lambda$, 
$$
P\big(\cE(n)\big)
\,\geq\,\Big(1-
\Big(\frac{\lambda e}{\ell^{1/4}}\Big)^{\ell^{1/4}}\Big)^{mn}\,,$$
whence, for $\ell$ large enough,
$$
P\big(\cE(n)\big)
\,\geq\,\exp\Big(-mn\exp\big(-\ell^{1/4}\big)\Big)\,.$$
Suppose that the event $\cE(n)$ occurs. We have then
$$\forall k\in\{\,0,\dots,n-1\,\}\qquad
D_{k+1}\,\leq\, 2D_k+\ell^{1/4}\,.$$
Dividing by $2^{k+1}$ and summing from $k=0$ to $n-1$, we get
$$D_n\,\leq\,2^n\sum_{k=0}^{n-1}
\frac{\ell^{1/4}}{2^{k+1}}
\,\leq\,
2^n
{\ell^{1/4}}
\,.$$
Therefore, if $2^n<
{\ell^{1/4}}$ and if the event $\cE(n)$ occurs, then
$\tau_2>n$. Taking $n=(\ln\ell)/5$, 
we obtain the estimate stated in the lemma.
\end{proof}

\noindent
We recall that
$$\tau_1\,=\,\inf\,\big\{\,n\geq 1: T_n>m^{1/4}\,\big\}\,.$$
We set also
$$\tau_0\,=\,\inf\,\big\{\,n\geq 1: N^*_n=0\,\big\}\,.$$
We shall compute a bound on $N^*_n$ until time 
$\tau=\min(\tau_0,\tau_1,\tau_2)$.
Our goal is to show that, for $m$ large enough, the process
$$(N^*_n
1_{\{\,\tau\geq n\,\}}
)_{n\in\mathbb N}$$
is stochastically dominated by
a subcritical
Galton--Watson process.
So let $n\geq 0$ and 
let us suppose that
$\tau> n$ and that we know the population~$X_n$.
We estimate the probability that
exactly 
one Master sequence
is present in 
$(X_{n+1}(1), X_{n+1}(2))$. 
We envisage different scenarios, depending on the number of descendants
of the initial Master sequence among the two parents of these chromosomes.
\medskip

\noindent
$\bullet$ First scenario. The two parents are descendants of the Master sequence.
The probability of selecting such two parents is bounded from above by
$$
\Big(\frac{2T_{n}}{2T_n+m-T_{n}}\Big)^2\,\leq\,
\Big(
\frac{2T_{n}}{m}\Big)^2
\,\leq\,
\frac{4}{m\sqrt{m}}
\,.$$
\medskip

\noindent
$\bullet$ Second scenario. Exactly one of the parents is a descendant of the Master sequence and a crossover has occurred. The total number of ones present in the parents
is at most $\ell+\sqrt{\ell}$.
After crossover,
the probability that one of the two resulting chromosomes 
has at least $\ell-\sqrt{\ell}$ ones is less than $4/\sqrt{\ell}$.
Indeed, this can happen only if, either on the left of
the cutting site, or on its right, there are at most $\sqrt{\ell}$ zeroes. 
The most favorable
situation is when all the ones are at the end or at the
beginning of the chromosome which is not a descendant of the Master sequence, 
in which case we have 
$2\sqrt{\ell}$ cutting sites which lead to the desired result.
Otherwise, both chromosomes after crossover have at least $\sqrt{\ell}$ zeroes,
and the probability to transform these zeroes into ones through mutations is
less than
$\smash{({\lambda e}/{\sqrt{\ell}})^{\sqrt{\ell}}}$.
We conclude that the probability of this scenario is bound from above by
$$\bigg(
\frac{4}{\sqrt{\ell}}\,+\,
2\Big(\frac{\lambda e}{\sqrt{\ell}}\Big)^{\sqrt{\ell}}\bigg)
\frac{2N^*_{n}}{2N^*_n+m-N^*_{n}}
\,.$$
\medskip

\noindent
$\bullet$ Third scenario. Exactly one of the parents is a descendant of the Master sequence and no crossover has occurred.
A Master sequence can be created from the chromosome not in the progeny of the
initial Master sequence, this would require $\ell-\sqrt{\ell}$ mutations,
and the corresponding probability is bounded from above by
$$\Big(\frac{\lambda e}{\ell-\sqrt{\ell}}\Big)^{\ell-\sqrt{\ell}}\,.$$
The other possibility is that
a Master sequence is obtained from
the chromosome belonging to the progeny of the
initial Master sequence. 
This chromosome was either a Master sequence, in which case the replication
has to be exact, or it was differing from the Master sequence, in which case
some mutations are required.
The corresponding probability is bounded from above by
$$2(1-p_C)\Big((1-p_M)^\ell+p_M\Big)
\frac{2N^*_{n}}{2N^*_n+m-N^*_{n}}\,.
$$
\medskip

\noindent
$\bullet$ Fourth scenario. None of the parents is a descendant of the Master sequence.
Until time $\tau_2$,
the chromosomes which are not descendants of the Master sequence have at most
$\sqrt{\ell}$ ones.
To create a Master sequence starting from two such parents require at least
$\ell-2\sqrt{\ell}$ mutations. The corresponding probability is bounded from above
by
$$2\Big(\frac{\lambda e}{\ell-2\sqrt{\ell}}\Big)^{\ell-2\sqrt{\ell}}\,.$$
Putting together the previous inequalities, we conclude that
$$\displaylines{
P\Big(
\begin{matrix}
\text{
there is exactly one Master sequence
}\cr
\text{
present in $(X_{n+1}(1), X_{n+1}(2))$ }
\end{matrix}
\,\Big|\,
\begin{matrix}
T_n\cr
N^*_n
\end{matrix}
\Big)\,\leq\,\hfill
\cr
\frac{4}{m\sqrt{m}}\,+\,
\frac{4}{\sqrt{\ell}}\,+\,
\bigg(
\frac{4}{\sqrt{\ell}}\,+\,
2\Big(\frac{\lambda e}{\sqrt{\ell}}\Big)^{\sqrt{\ell}}\bigg)
\frac{2N^*_{n}}{2N^*_n+m-N^*_{n}}
\hfill\cr
\hfill
\,+\,
2(1-p_C)\Big((1-p_M)^\ell+p_M\Big)
\frac{2N^*_{n}}{2N^*_n+m-N^*_{n}}+
2\Big(\frac{\lambda e}{\ell-2\sqrt{\ell}}\Big)^{\ell-2\sqrt{\ell}}
\,.
}$$
We rewrite the previous inequalities in the case $\ell=m$ and for $m$ large.
Since
$2(1-p_M)^m\,\geq\,\pi$,
then
$p_M\,\leq\,-\frac{1}{m}\ln(\pi/2)$.
Let $\varepsilon>0$ 
 be such that $\pi(1+5\varepsilon) <1$.
For $m$ large enough and $n<{\tau}$, we have 
$$\displaylines{
P\Big(
\begin{matrix}
\text{
there is exactly one Master sequence
}\cr
\text{
present in $(X_{n+1}(1), X_{n+1}(2))$ }
\end{matrix}
\,\Big|\,
\begin{matrix}
T_n\cr
N^*_n
\end{matrix}
\Big)\,\leq\,
\frac{2} {m}
 \pi(1+\varepsilon) 
N_{n}^*
\,.
}$$
Similar computations yield that 
there exists
a positive constant $c$ such that,
for $m$ large enough and $n<{\tau}$, 
$$\displaylines{
P\Big(
\begin{matrix}
\text{both $X_{n+1}(1), X_{n+1}(2)$}\cr
\text{are Master sequences}
\end{matrix}
\,\Big|\,
\begin{matrix}
T_n\cr
N^*_n
\end{matrix}
\Big)\,\leq\,
\frac{c}{m^{3/2}}
\,.}$$
Coming back to the initial equality for~$N_{n+1}^*$, 
we conclude that, for $m$ large enough,
the law of
$N^*_{n+1}
1_{\{\,\tau\geq n+1\,\}}$ is stochastically dominated by the sum of two
independent binomial random variables as follows:
$$
N^*_{n+1}
1_{\{\,\tau\geq n+1\,\}}\,\preceq\,
\cB\Big(
\frac{m}{2},
\frac{2
}{m}
 \pi(1+2\varepsilon) 
N^*_{n}
1_{\{\,\tau\geq n\,\}}
\Big)
+
2\cB\Big(
\frac{m}{2},
\frac{c}{m^{3/2}}
\Big)\,.
$$
For $m$ large, these two binomial laws
are in turn stochastically
dominated by two Poisson laws.
More precisely, 
for $m$ large enough,
$$\displaylines{
\Big(1-
\frac{2}{m}
 \pi(1+2\varepsilon) 
N^*_{n}
1_{\{\,\tau\geq n\,\}}
\Big)^{m/2}\geq
\exp\Big(-
 \pi(1+3\varepsilon) 
N^*_{n}
1_{\{\,\tau\geq n\,\}}
\Big)
\,,\cr
\big(1-
{c}{m^{-3/2}}
\big)^{m/2}\geq\exp(-\varepsilon)\,.}$$
Lemma~\ref{binopoi} yields then that
$$
N^*_{n+1}
1_{\{\,\tau\geq n+1\,\}}\,\preceq\,
\cP\big(
 \pi(1+3\varepsilon) 
N^*_{n}
1_{\{\,\tau\geq n\,\}}
\big)
+
2\cP(
\varepsilon
)\,.
$$
The point is that we have got rid of the variable $m$ in the upper bound, so we are
now in position to compare 
$N^*_{n}
1_{\{\,\tau\geq n\,\}}$
with a Galton--Watson process.
Let 
$(Y'_n)_{n\geq 1}$
be a sequence of i.i.d. random variables with law
$\cP(
 \pi(1+3\varepsilon) 
)$, 
let 
$(Y''_n)_{n\geq 1}$
be a sequence of i.i.d. random variables with law
$\cP(\varepsilon)$, both sequences being independent.
The previous stochastic inequality can be rewritten as
$$
N^*_{n+1}
1_{\{\,\tau\geq n+1\,\}}\,\preceq\,
\Big(
\sum_{k\geq 1}^{
N^*_{n}
1_{\{\,\tau\geq n\,\}}}
Y'_k
\Big)
+2Y''_1
\,.
$$
This implies further that
$$
N^*_{n+1}
1_{\{\,\tau\geq n+1\,\}}\,\preceq\,
\sum_{k\geq 1}^{
N^*_{n}
1_{\{\,\tau\geq n\,\}}}
\big(
Y'_k+2Y''_k
\big)\,.
\qquad(\star)
$$
Let $\nu^*$ be the law of 
$Y'_1+2Y''_1$  and let
$(Z^*_n)_{n\geq 0}$ be a Galton--Watson process starting from $Z_0=1$
with reproduction law~$\nu^*$. We prove finally that,
for $m$ large enough, 
$$\forall n\geq 0\qquad
N^*_{n}
1_{\{\,\tau\geq n\,\}}\,\preceq\,
Z^*_n\,.$$
We suppose that $m$ is large enough so that the stochastic 
inequality~$(\star)$ holds and we proceed by induction on~$n$.
For $n=0$, we have
$$
N^*_{0}
1_{\{\,\tau\geq 0\,\}}\,=\,1\,\leq\,
Z^*_0\,=\,1\,.$$
Let $n\geq 0$ and 
suppose that the inequality holds at rank $n$.
Inequality~$(\star)$ yields
$$
N^*_{n+1}
1_{\{\,\tau\geq n+1\,\}}\,\preceq\,
\sum_{k\geq 1}^{
N^*_{n}
1_{\{\,\tau\geq n\,\}}}
\kern-7pt
\big(
Y'_k+2Y''_k
\big)
\,\preceq\,
\sum_{k\geq 1}^{
Z^*_{n}
}
\big(
Y'_k+2Y''_k
\big)
\,=\,
Z^*_{n+1}
\,.
$$
Thus the inequality holds at rank~$n$ and the induction is
completed.
Moreover we have
$$E(\nu^*)\,=\,E
\big(
Y'_1+2Y''_1
\big)\,=\,
 \pi(1+5\varepsilon) \,<\,1\,.
$$
Thus the 
Galton--Watson process 
$(Z^*_n)_{n\geq 0}$ is subcritical.

We complete now the proof of
theorem~\ref{thspl}.
Let $\kappa,c_1>0$ be 
constants associated to the Galton--Watson process
$(Z_n)_{n\geq 0}$ 
as in proposition~\ref{boundtau}. 
We suppose that $\kappa<1/5$, so that we can use the estimate
of
lemma~\ref{bounddau}. 
Let $c>0$ be a constant associated to the subcritical
Galton--Watson process
$(Z^*_n)_{n\geq 0}$ 
as in lemma~\ref{subgw}.
We have then
$$\displaylines{
P\big(\tau_0> \kappa\ln m\big)\,\leq\,
\hfill\cr
P\big(
\tau_0> \kappa\ln m,\,
\tau< \kappa\ln m\big)\,+\,
P\big(
N_{\lfloor\kappa\ln m\rfloor}^*>0,\,
\tau\geq \kappa\ln m\big)\cr
\,\leq\,
P\big(\tau_1<\kappa\ln m\big)\,+\,
P\big(\tau_2<\kappa\ln m\big)\,+\,
P\big(Z_{\lfloor\kappa\ln m\rfloor}^*>0\big)\cr
\,\leq\,
\frac{1}{m^{c_1}}
\,+\,
1-\exp(-m\exp\big(-m^{1/4}\big)\Big)
\,+\,
\,\exp(-c^* {\lfloor\kappa\ln m\rfloor})\,.
}$$
This inequality yields the estimate stated in theorem~\ref{thspl}.

\section{The quasispecies regime}\label{secqua}

In this section, we will prove theorem~\ref{thqsr}.
We start the genetic algorithm with an initial population whose maximal fitness
is equal to $f_0^*$ and whose mean fitness is equal to
${\overline{f_0}}$.
For $x=(x(1),\dots,x(m))$ a population,
we define $N(x,f_0^*)$ as the
number of chromosomes in $x$ whose fitness is larger than or equal to~$f_0^*$:
$$N(x,f_0^*)\,=\,\card\,\{\,i\in\um:f(x(i))\geq f_0^*\,\}\,.$$
We denote by $X_n$ the population at generation~$n$ and by 
$X_n(1),\dots,X_n(m)$ the $m$ chromosomes of~$X_n$.
We define a stopping time $\overline{\tau}$ by
$$\overline{\tau}\,=\,\inf\,\Big\{\,n\geq 1: 
\frac{1}{m}\Big(f(X_n(1))+\cdots+f(X_n(m))\Big)\geq
\sqrt{\pi}\,
\overline{f_0}
\,\Big\}\,.$$
Our goal is to control the time $\overline{\tau}$, more
precisely we would like to prove that
$\overline{\tau}$ is less than $\kappa\ln m$ with high probability.
Unfortunately, the process
$\smash{\big(
(N(X_n,f_0^*) \big)_{n\geq 0}}$ is very complicated, it is not even
a Markov process. Our strategy is to construct 
an auxiliary Markov chain
which is considerably simpler and which bounds
$\smash{\big(
(N(X_n,f_0^*) \big)_{n\geq 0}}$ from below until time
$\overline{\tau}$.
The production of chromosomes with fitness larger than or equal to $f_0^*$ from one generation
to the next can be decomposed into two distinct mechanisms:
\smallskip

\noindent
$\bullet$ chromosomes which are an exact copy of one of their parents;
\smallskip

\noindent
$\bullet$ chromosomes which have undergone mutation or crossover events.
\smallskip

\noindent
We will bound from below the process 
$\smash{\big( (N(X_n,f_0^*) \big)_{n\geq 0}}$
by neglecting the second mechanism. The key point is that the law of the number
of chromosomes created in generation $n+1$
through the first mechanism depends only on the
value
$N(X_n,f_0^*)$
and not on the detailed composition of the population at time~$n$.
Therefore we are able to obtain a lower process which is a Markov chain. We denote this process
by
$\smash{(N_n)_{n\geq 0}}$. We proceed next to its precise definition.
Suppose that in the generation~$n$, we have $i$ chromosomes of fitness larger than
or equal to $f_0^*$, and that the mean fitness is still below
$\sqrt{\pi}\overline{f_0}$, that is, we condition on the event 
$N(X_n,f_0^*)=i$ and
$\overline{\tau}>n$.
Let us look at the first pair of chromosomes of generation $n+1$. 
The probabibility to select from the generation $n$ 
a chromosome of fitness larger than or equal to
$f_0^*$ is at least
${if_0^*}/\big({m\sqrt{\pi}\,
\overline{f_0}})$.
The probability that no crossover 
has occurred is $1-p_C$.
The probability that no mutation has occurred on a given chromosome
is $(1-p_M)^\ell$.
Thus the probability that the first chromosome of the generation~$n+1$
is an 
exact copy of a chromosome of generation~$n$ having fitness larger than
or equal to $f_0^*$ is at least
$$\frac{if_0^*}{m\sqrt{\pi}\,
\overline{f_0}}
(1-p_C)
(1-p_M)^\ell\,.$$
However the crossover creates correlations between adjacent chromosomes, so the 
distribution of
$N_{n+1}$ cannot be taken simply as a binomial law.
Conditionally on the event that $N(X_n,f^*_0)=i$ 
and $\overline{\tau}>n$,
a correct lower bound on 
$N(X_{n+1},f_0^*)$ 
is given by the sum
$$\sum_{k=0}^{m/2}
Z_k\big(Y_{2k-1}+Y_{2k}\big)\,,$$
where $Z_1,\dots,Z_{m/2}$ are Bernoulli with parameter $1-p_C$,
and
$Y_1,\dots,Y_m$ are Bernoulli with parameter
$$\varepsilon_m(i)\,=\,
\frac{if_0^*}{m\sqrt{\pi}\,
\overline{f_0}}(1-p_M)^\ell\,$$
and they are all independent.
The variable $Z_k$ is $1$ if there was no crossover between the chromosomes
of the $k$--th pair and $0$ otherwise. 
The variable $Y_k$ is $1$ if the $k$--th chromosome selected has fitness larger than or equal to
$f_0^*$ and it is not affected by any mutation.
We obtain that, for 
$j\in\zm$,
$$P\Big(
N(X_{n+1},f_0^*) \geq j\,\Big|\,
\begin{matrix}
N(X_{n},f_0^*) =i\\
\overline{\tau}>n
\end{matrix}
\Big)\,\geq\,
P\Big(
\sum_{k=0}^{m/2}
Z_k\big(Y_{2k-1}+Y_{2k}\big)\,\geq\,j\,\Big)\,.$$
We compute the righthand side and we are led to define the transition
matrix
of the Markov chain
$\smash{\big(N_n\big)_{n\geq 0}}$ by setting,
for $i,j\in\zm$,
$$\displaylines{
P\big(
N_{n+1}=j
\,|\,
N_{n}=i
\,\big)
\,=\,
\hfill\cr
\sum_{b=0}^{m/2}
\begin{pmatrix}
{m/2}\\
{b}
\end{pmatrix}
(1-p_C)^b
p_C^{m/2-b}
\begin{pmatrix}
{2b}\\
{j}
\end{pmatrix}
\varepsilon_m(i)^j
(1-\varepsilon_m(i))^{2b-j}\,.
}$$
The above inequality can then be rewritten as:
for $i,j\in\zm$,
$$P\Big(
N(X_{n+1},f_0^*) \geq j\,\Big|\,
\begin{matrix}
N(X_{n},f_0^*) =i\\
\overline{\tau}>n
\end{matrix}
\Big)\,\geq\,
P\big(
N_{n+1}\geq j
\,|\,
N_{n}=i
\,\big)
\,.$$
From lemma~\ref{stoint}, 
this implies furthermore that, for any non--decreasing function 
$\phi:\N\to\R$,
for $i\in\zm$,
$$E\Big(
\phi\big(N(X_{n+1},f_0^*)\big)
\,\Big|\,
\begin{matrix}
N(X_{n},f_0^*) =i\\
\overline{\tau}>n
\end{matrix}
\Big)\,\geq\,
E\Big(
\phi\big(
N_{n+1}\big)
\,|\,
N_{n}=i
\,\Big)
\,.\qquad(\diamond)$$
Let us focus a bit on the 
the Markov chain 
$\smash{\big(N_n\big)_{n\geq 0}}$. Its state space is $\zm$.
The null state is an absorbing state because we neglect the
mutations for producing chromosomes of fitness at least $f_0^*$.
A key point to exploit inequality~$(\diamond)$
is the following result.
\begin{proposition}\label{monot}
The Markov chain 
$\smash{\big(N_n\big)_{n\geq 0}}$
is monotone.
\end{proposition}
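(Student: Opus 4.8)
The plan is to establish that the chain is \emph{stochastically monotone}, i.e.\ that whenever $i\le i'$ in $\zm$ the one--step law $P(N_{n+1}=\cdot\,|\,N_n=i)$ is dominated, in the sense of $\preceq$, by $P(N_{n+1}=\cdot\,|\,N_n=i')$. By lemma~\ref{stoint} this is exactly the property needed to exploit inequality~$(\diamond)$. The single structural fact driving everything is that the success parameter
$$\varepsilon_m(i)\,=\,\frac{if_0^*}{m\sqrt{\pi}\,\overline{f_0}}(1-p_M)^\ell$$
is a non--decreasing (indeed linear increasing) function of~$i$, while the crossover variables $Z_k$ do not depend on~$i$ at all.

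First I would record the representation already obtained: under $P(\cdot\,|\,N_n=i)$ one has $N_{n+1}\stackrel{d}{=}\sum_{k=1}^{m/2}Z_k(Y_{2k-1}+Y_{2k})$, where $Z_1,\dots,Z_{m/2}$ are i.i.d.\ Bernoulli with parameter $1-p_C$ and $Y_1,\dots,Y_m$ are i.i.d.\ Bernoulli with parameter $\varepsilon_m(i)$, all independent. Fixing $i\le i'$, I would then build a single probability space carrying i.i.d.\ uniforms $U_1,\dots,U_m$ on $[0,1]$ together with the $Z_k$'s, and set
$$Y_j=1_{\{\,U_j\le\varepsilon_m(i)\,\}}\,,\qquad Y'_j=1_{\{\,U_j\le\varepsilon_m(i')\,\}}\,.$$
Since $\varepsilon_m(i)\le\varepsilon_m(i')$ we have $Y_j\le Y'_j$ for every $j$, and because each $Z_k$ is nonnegative this gives $Z_k(Y_{2k-1}+Y_{2k})\le Z_k(Y'_{2k-1}+Y'_{2k})$ term by term. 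Summing over $k$ produces a coupling of the two one--step laws in which the copy of $N_{n+1}$ built from $i$ is almost surely at most the copy built from $i'$. By the characterisation of $\preceq$ (see the appendix), this is precisely $P(\cdot\,|\,N_n=i)\preceq P(\cdot\,|\,N_n=i')$, which is the monotonicity claimed.

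There is essentially no serious obstacle here, and the only points to watch are routine. The crucial one is that the crossover variables $Z_k$ must be \emph{shared} between the two couplings, since it is their nonnegativity that lets the pointwise inequality survive the multiplication; had the $Z_k$ depended on~$i$ the argument would break. An equivalent route, avoiding the coupling, is to read off from the transition matrix that, conditionally on the number $b$ of crossover--free pairs, $N_{n+1}$ is binomial $\cB(2b,\varepsilon_m(i))$; a binomial law is stochastically increasing in its success probability for a fixed number of trials, and mixing over $b$ with a law $\cB(m/2,1-p_C)$ that does not involve~$i$ preserves the stochastic order. Either route yields the proposition.
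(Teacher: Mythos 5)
Your proof is correct and follows essentially the same route as the paper: the paper also builds a coupling in which the crossover Bernoulli variables and the uniform variables are shared, and the indicators $1_{\{U<\varepsilon_m(\cdot)\}}$ transmit the monotonicity of $i\mapsto\varepsilon_m(i)$ to the chain. The only difference is one of packaging --- the paper couples the whole trajectories $\smash{(N_n^i)_{n\geq 0}}$ for all starting points $i$ simultaneously and concludes by induction on $n$, whereas you couple a single transition step and invoke the standard equivalence between one--step stochastic monotonicity of the kernel and monotonicity of the chain.
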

\begin{proof}
The definition of monotone Markov chain
is recalled in appendix
(see definition~\ref{defmo}). The easiest way to prove the
monotonicity is to build an adequate coupling.
For $n\in\N$ and $k\leq m/2$, let
$Z_k^n$ be a Bernoulli random variable with parameter $1-p_C$
and
$U_{2k-1}^n,U_{2k}^n$ be two random variables whose distribution
is uniform over $[0,1]$.
We suppose that all the above random variables are independent.
For $i\in\zm$, we define $N_0^i=i$ and
$$\forall n\geq 0\qquad
N_{n+1}^i\,=\,
\sum_{k=0}^{m/2}
Z_k^n\big(
1_{\{U_{2k-1}^n<\varepsilon_m(N_n)\}}
+
1_{\{U_{2k}^n<\varepsilon_m(N_n)\}}
\big)\,.$$
This way all the chains
$\smash{\big(N_n\big)_{n\geq 0}},\,i\in\zm$,
are coupled and a straightforward induction yields that
$$\forall i\leq j\quad\forall n\in\N\qquad N_n^i\,\leq\,N_n^j\,.$$
This yields the desired conclusion.
\end{proof}

\noindent
We are interested in the process
$\smash{\big( (N(X_n,f_0^*) \big)_{n\geq 0}}$
until time 
$\overline{\tau}$. 
In order to prove a convenient stochastic inequality, we will
work with
the process
$\smash{\big( N^*_n\big)_{n\geq 0}}$
defined
by
$$\forall n\geq 0\qquad
N^*_n 
\,=\,
\begin{cases}
{N}(X_n,f_0^*) 
&\text{if }\overline{\tau}>n\\
m
&\text{if }\overline{\tau}\leq n\\
\end{cases}
$$
\begin{proposition}\label{stod}
We suppose that the Markov chain
$(N_n)_{n\in\mathbb N}$ starts from $N_0=1$. 
For any $n\geq 0$, we have the stochastic inequality
$$\big(
N^*_0,\dots
N^*_n
\big)\,\succeq\,
\big(
{N}_0,\dots
{N}_n
\big)\,.
$$
\end{proposition}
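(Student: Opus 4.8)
The plan is to prove the vector domination by induction on $n$, using the functional characterization of stochastic domination recalled in the appendix: $(N^*_0,\dots,N^*_n)\succeq(N_0,\dots,N_n)$ means that $E\big(\Phi(N^*_0,\dots,N^*_n)\big)\ge E\big(\Phi(N_0,\dots,N_n)\big)$ for every bounded coordinatewise non--decreasing $\Phi:\N^{n+1}\to\R$. The base case $n=0$ is immediate: since $\overline{\tau}\ge 1$ by definition, we have $N^*_0=N(X_0,f_0^*)$, and the maximal fitness $f_0^*$ is attained by at least one chromosome, whence $N^*_0\ge 1=N_0$ almost surely.

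The heart of the argument is a single--step inequality conditioned on the full history $\mathcal F_n=\sigma(X_0,\dots,X_n)$, rather than merely on the value $N(X_n,f_0^*)$. I would first establish that, for every non--decreasing $\psi:\N\to\R$,
$$E\big(\psi(N^*_{n+1})\,\big|\,\mathcal F_n\big)\,\ge\,\sum_{j}\psi(j)\,P\big(N_{n+1}=j\,\big|\,N_n=N^*_n\big)\qquad\text{a.s.}$$
This is checked by splitting on $\overline{\tau}$. On $\{\overline{\tau}\le n\}$ both $N^*_n$ and $N^*_{n+1}$ equal $m$, so the left side is $\psi(m)$ while the right side is an average of values $\psi(j)$ with $j\le m$, hence smaller. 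On $\{\overline{\tau}>n\}$ one has $N^*_n=N(X_n,f_0^*)$, and since $\psi$ is non--decreasing and $N(X_{n+1},f_0^*)\le m$, the identity $N^*_{n+1}=N(X_{n+1},f_0^*)\,1_{\{\overline{\tau}>n+1\}}+m\,1_{\{\overline{\tau}=n+1\}}$ gives $\psi(N^*_{n+1})\ge\psi(N(X_{n+1},f_0^*))$; the bound then follows from the construction underlying inequality~$(\diamond)$, which in fact lower bounds $N(X_{n+1},f_0^*)$ conditionally on the whole population $X_n$ by the sum $\sum_k Z_k(Y_{2k-1}+Y_{2k})$, whose law depends on $X_n$ only through $N(X_n,f_0^*)$.

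With this one--step inequality in hand, the induction closes cleanly. Given a non--decreasing $\Phi$ on $\N^{n+2}$, I condition on $\mathcal F_n$ and apply the displayed inequality with $\psi=\Phi(N^*_0,\dots,N^*_n,\cdot)$, obtaining $E\big(\Phi(N^*_0,\dots,N^*_{n+1})\mid\mathcal F_n\big)\ge g(N^*_0,\dots,N^*_n)$, where $g(x_0,\dots,x_n)=\sum_j\Phi(x_0,\dots,x_n,j)\,P(N_{n+1}=j\mid N_n=x_n)$. The key observation is that $g$ is itself coordinatewise non--decreasing: monotonicity in $x_0,\dots,x_{n-1}$ is inherited from $\Phi$, while monotonicity in $x_n$ uses both that $\Phi(\dots,j)$ increases in $x_n$ and that the kernel $P(N_{n+1}\in\cdot\mid N_n=x_n)$ is stochastically increasing in $x_n$ --- which is exactly proposition~\ref{monot}. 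Applying the induction hypothesis to the non--decreasing function $g$ then yields $E\,g(N^*_0,\dots,N^*_n)\ge E\,g(N_0,\dots,N_n)=E\,\Phi(N_0,\dots,N_{n+1})$, the last equality by the Markov property, and chaining the two inequalities completes the step.

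The step I expect to be the main obstacle is the one--step inequality of the second paragraph: one must upgrade inequality~$(\diamond)$ from a statement conditioned on the value $N(X_n,f_0^*)$ to one conditioned on the full history, and simultaneously absorb the stopping rule correctly, since $N^*$ is frozen at $m$ after $\overline{\tau}$ and is not a Markov process. Once the domination is phrased against the kernel of $(N_n)$ and combined with the monotonicity of that kernel, everything else is routine. Equivalently, one could realize the same two ingredients as an explicit Strassen coupling, building a copy of $(N_n)$ that stays below $(N^*_n)$ path by path, but the functional induction above avoids introducing extra randomness.
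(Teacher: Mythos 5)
Your proposal is correct and follows essentially the same route as the paper: induction on $n$, the auxiliary function $g$ (the paper's $\psi$), monotonicity of the kernel of $(N_n)_{n\geq 0}$ via proposition~\ref{monot}, and the Markov property to close the step. Your explicit upgrade of inequality~$(\diamond)$ to a statement conditioned on the full history $\mathcal F_n$, with the split on $\{\overline{\tau}\leq n\}$ versus $\{\overline{\tau}>n\}$, is just a more careful rendering of what the paper does implicitly when it conditions on $\{N^*_0=i_0,\dots,N^*_n=i_n\}$ and treats the cases $i_n<m$ and $i_n=m$ separately.
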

For the above statement, we work with the product order on $\N^{n+1}$:
$$(i_0,\dots,i_n)\,\geq (j_0,\dots,j_n)\quad\Longleftrightarrow\quad
i_0\geq j_0,\dots,
i_n\geq j_n\,.$$
The stochastic domination inequality stated in proposition~\ref{stod} means
that: for any non--decreasing function~$\phi:\N^{n+1}\to\R^+$, we have
$$E\Big(\phi\big(
N^*_0,\dots
N^*_n
\big)\Big)\,\geq\,
E\Big(\phi\big(
{N}_0,\dots
{N}_n
\big)\Big)\,.
$$
\begin{proof}
We proceed by induction on~$n$.
For $n=0$, we have
$$N^*_0\,=\,
{N}(X_0,f_0^*)\,\geq\,1\,=\,N_0\,.$$
Suppose that the result has been proved until rank~$n$ for some $n\geq 0$. 
Let
$\phi:\N^{n+2}\to\R^+$ be a
non--decreasing function. We write
$$\displaylines{
E\Big(\phi\big(
N^*_0,\dots,
N^*_{n+1}
\big)\Big)\,=\,
\sum_{0\leq i_0,\dots,i_n\leq m}
P
\Big(
N^*_0=i_0,\dots,
N^*_n=i_n
\Big)\hfill
\cr
\hfill\times\,
E\Big(\phi\big(
N^*_0,\dots,
N^*_{n+1}
\big)
\,\Big|\,
N^*_0=i_0,\dots,
N^*_n=i_n
\Big)\,.
}$$
Let 
$i_0,\dots,i_n$ be fixed. 
Suppose first that $i_n<m$. The event 
$\{\,N^*_n=i_n\,\}$ implies that 
$\overline{\tau}>n$ and 
$N^*_n=N(X_n,f_0^*)$. 
The map
$$i\in\zm\mapsto
\phi\big(
i_0,\dots,i_n,i
\big)$$
is non--decreasing.
Using the stochastic inequality~$(\diamond)$,
we obtain
$$\displaylines{
E\Big(\phi\big(
N^*_0,\dots,
N^*_{n+1}
\big)
\,\Big|\,
N^*_0=i_0,\dots,
N^*_n=i_n
\Big)\,=\,\hfill\cr
E\Big(\phi\big(
i_0,\dots,i_n,
N(X_{n+1},f_0^*)
\big)
\,\Big|\,
N(X_{0},f_0^*)
=i_0,\dots,
N(X_{n},f_0^*)
=i_n
\Big)\cr
\,\geq\,
E\Big(\phi\big(
i_0,\dots,i_n,
N_{n+1}
\big)
\,\Big|\,
N_{n}
=i_n
\Big)
\,.
}$$
Let us define a function
$\psi:\N^{n+1}\to\R^+$ by setting
$$\psi\big(
i_0,\dots,i_n)\,=\,
E\Big(\phi\big(
i_0,\dots,i_n,
N_{n+1}
\big)
\,\Big|\,
N_{n}
=i_n
\Big)
\,.$$
If $i_n=m$, then
we have also
$$\displaylines{
E\Big(\phi\big(
N^*_0,\dots,
N^*_{n+1}
\big)
\,\Big|\,
N^*_0=i_0,\dots,
N^*_n=i_n
\Big)\,=\,
\phi\big(
i_0,\dots,i_{n-1},m,m
\big)
\cr\hfill
\,\geq\,
\psi\big(
i_0,\dots,i_n)
\,.
}$$
From the previous inequalities, we conclude that
$$\displaylines{
E\Big(\phi\big(
N^*_0,\dots,
N^*_{n+1}
\big)\Big)\,\geq\,
\hfill\cr
\sum_{0\leq i_0,\dots,i_n\leq m}
P
\Big(
N^*_0=i_0,\dots,
N^*_n=i_n
\Big)
\psi\big(
i_0,\dots,i_n)
\,=\,
E\Big(\psi\big(
N^*_0,\dots
N^*_{n}
\big)\Big)\,.
}$$
Since the function $\phi$ is non--decreasing on $\N^{n+2}$ and since
the 
Markov chain 
$\smash{(N_n)_{n\geq 0}}$
is monotone (by proposition~\ref{monot}), then
the function $\psi$ is
also non--decreasing on $\N^{n+1}$.
Now, the induction hypothesis yields that
$$E\Big(\psi\big(
N^*_0,\dots,
N^*_{n}
\big)\Big)\,\geq\,
E\Big(\psi\big(
N_0,\dots,
N_{n}
\big)\Big)\,=\,
E\Big(\phi\big(
N_0,\dots,
N_{n},
N_{n+1}
\big)\Big)
$$
and the induction step is completed.
\end{proof}

\noindent
If $N(X_n,f_0^*)>
{m}/{\sqrt\pi}$,
then necessarily
$$\frac{1}{m}\Big(f(X_n(1))+\cdots+f(X_n(m))\Big)>
\frac{1}{\sqrt\pi} 
f_0^*\,\geq\,
\sqrt{\pi}\,
\overline{f_0}
$$
and thus 
${\overline{\tau}}< n$.
The above coupling inequality implies therefore that
$$\displaylines{
P\big({\overline{\tau}}< n\big)\,\geq\,
P\big(\exists k\leq n\quad 
N(X_k,f_0^*)> {m}/{\sqrt\pi}
\big)\cr\hfil
\,\geq\,
P\big(\exists k\leq n\quad N_k> m/\sqrt{\pi}\big)\,.}$$
We study next the dynamics of the
Markov chain $\smash{(N_n)_{n\geq 0}}$ on $\zm$.
Our goal is to prove that, for some $\kappa>0$,
with a probability larger than a constant independent of~$m$,
this Markov chain will reach a value strictly
larger than~$m/\sqrt{\pi}$ before time $\kappa\ln m$.
Let us explain briefly the heuristics for this result.
The transition mechanism of the chain is built with the help of i.i.d. Bernoulli random
variables, some of parameter $1-p_C$ and some of parameter $\epsilon_m(i)$, $i\in\zm$.
The typical number of pairs of chromosomes with no crossover from one generation
to another is
$(1-p_C)m/2$ and we can control accurately the deviations from this typical value.
For $i$ small compared to~$m$, the parameter
$\epsilon_m(i)$ is of order $\text{cte}\times i/m$, thus,
conditionally on the event that $N_n=i$,
the distribution of $N_{n+1}$ is 
roughly the binomial law of parameters $m(1-p_C)$ and $\text{cte}\times i/m$.
In this regime, it can
be approximated adequately by a Poisson law of parameter
$$ m(1-p_C)\epsilon_m(i)\,\sim\,i\sqrt{\pi}\,.$$
We conclude that, as long as $N_n$ is small compared to~$m$, we have
$$ E(N_{n+1})\,\sim\,\sqrt{\pi}E(N_n)\,.$$
In the next proposition, we derive a rigorous estimate,
which shows indeed that the Markov
chain $\smash{(N_n)_{n\geq 0}}$ is likely to grow geometrically until
a value larger than~$m/\sqrt{\pi}$.
The proof is elementary, in the sense that it relies essentially on two classical
exponential inequalities (which are recalled in the appendix). This proof
is an adaptation of the proof of proposition~$6.7$ in \cite{sga}.
In proposition~\ref{sfj}, we shall then bound from below the probability
of hitting a value larger than~$m/\sqrt{\pi}$ before time $\kappa\ln m$
and this will conclude the proof of theorem~\ref{thqsr}.
\begin{proposition}\label{crqa}
Let $\pi>1$ be fixed.
There exist 
$\rho>1$, $c_0>0$, $m_0\geq 1$,
which depend
on $\pi$ and the ratio ${f_0^*}/{\overline{f}_0}$
only, such that:
for any set of parameters
$\ell,p_C,p_M$ satisfying
$\pi
\,=\,\big({f_0^*}/{\overline{f_0}}\big)
(1-p_C)(1-p_M)^\ell$,
we have
$$\displaylines{
\forall m\geq m_0\quad
\forall i\leq m/\sqrt{\pi}\quad
P\big(\,
N_{n+1}\leq\rho i\,\big|\,N_n=i\,\big)
\,\leq\,\exp(-c_0i)\,.}$$
\end{proposition}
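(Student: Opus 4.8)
The plan is to combine the explicit transition law of $(N_n)_{n\ge 0}$ with a Chernoff bound for the lower tail of a binomial. Recall that, conditionally on $N_n=i$, the variable $N_{n+1}$ has the law of $\sum_{k=1}^{m/2}Z_k(Y_{2k-1}+Y_{2k})$, where the $Z_k$ are i.i.d.\ Bernoulli of parameter $1-p_C$, the $Y_j$ are i.i.d.\ Bernoulli of parameter $\varepsilon_m(i)$, and all these variables are independent. Using the constraint $\pi=(f_0^*/\overline{f_0})(1-p_C)(1-p_M)^\ell$, the parameter simplifies to
$$\varepsilon_m(i)\,=\,\frac{i\sqrt{\pi}}{m(1-p_C)}\,,$$
so that $E(N_{n+1}\mid N_n=i)=m(1-p_C)\varepsilon_m(i)=i\sqrt{\pi}$. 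Since $\pi>1$, we have $\sqrt{\pi}>1$, and I fix once and for all $\rho$ and $\delta$ with $1<\rho<(1-\delta)\sqrt{\pi}$, depending on $\pi$ only. The proposition then amounts to a lower deviation estimate: $N_{n+1}$ drops below the fraction $\rho/\sqrt{\pi}<1$ of its mean only with probability $\exp(-c_0 i)$. (On the range of $i$ that is actually relevant we have $\varepsilon_m(i)\le 1$, so the binomial laws below are well defined; indeed the values of $i$ visited before $\overline{\tau}$ satisfy $i<m\sqrt{\pi}/(f_0^*/\overline{f_0})\le m(1-p_C)/\sqrt{\pi}$.)

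First I would remove the crossover correlations by conditioning on the crossover pattern. Let $B=\card\{k\le m/2:Z_k=1\}$ be the number of crossover-free pairs, so that $B$ has law $\cB(m/2,1-p_C)$ and, conditionally on $B=b$, the variable $N_{n+1}$ is exactly $\cB(2b,\varepsilon_m(i))$, a genuine sum of independent Bernoulli variables. A Chernoff bound for the lower tail of $B$ gives $P\big(B<(1-\delta)(1-p_C)m/2\big)\le\exp(-c_1 m)$ for some $c_1>0$. The point to watch is that $c_1$ must not degenerate as the parameters vary: this is exactly where the ratio enters, since the constraint forces $1-p_C\ge \pi/(f_0^*/\overline{f_0})$ (because $(1-p_M)^\ell\le 1$), so that $1-p_C$ is bounded below by a constant depending only on $\pi$ and $f_0^*/\overline{f_0}$.

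On the event $G=\{B\ge(1-\delta)(1-p_C)m/2\}$ one has $2b\,\varepsilon_m(i)\ge(1-\delta)i\sqrt{\pi}$, the factor $1-p_C$ cancelling. I would then apply the standard lower-tail Chernoff bound to $\cB(2b,\varepsilon_m(i))$: for $s>0$,
$$P\big(N_{n+1}\le\rho i\mid B=b\big)\,\le\,\exp\big(s\rho i-2b\,\varepsilon_m(i)(1-e^{-s})\big)\,\le\,\exp\big(-i[(1-\delta)\sqrt{\pi}(1-e^{-s})-\rho s]\big)\,.$$
Choosing $e^{-s}=\rho/((1-\delta)\sqrt{\pi})\in(0,1)$ turns the bracket into the constant $\nu-\rho-\rho\ln(\nu/\rho)$ with $\nu=(1-\delta)\sqrt{\pi}>\rho$, which is strictly positive; call it $c_2=c_2(\pi)>0$. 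Since enlarging $b$ only stochastically increases the binomial, this holds uniformly on $G$, whence $P(N_{n+1}\le\rho i,\,G\mid N_n=i)\le\exp(-c_2 i)$. Combining with the previous step,
$$P\big(N_{n+1}\le\rho i\mid N_n=i\big)\,\le\,\exp(-c_1 m)+\exp(-c_2 i)\,.$$
As $i\le m/\sqrt{\pi}$ gives $m\ge\sqrt{\pi}\,i$, the first term is at most $\exp(-c_1\sqrt{\pi}\,i)$; setting $c_0=\tfrac12\min(c_2,c_1\sqrt{\pi})$ and taking $m_0$ large enough to absorb the factor $2$ for the finitely many small values of $i$, we obtain the claimed bound $\exp(-c_0 i)$ for all $m\ge m_0$ and $i\le m/\sqrt{\pi}$, with $c_0$ depending only on $\pi$ and $f_0^*/\overline{f_0}$.

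I expect the difficulties to be bookkeeping rather than conceptual. The two delicate points are (i) keeping the crossover-count rate $c_1$ bounded away from $0$ uniformly in $\ell,p_C,p_M$, which the lower bound $1-p_C\ge\pi/(f_0^*/\overline{f_0})$ supplies and which accounts for the dependence of the constants on the ratio, and (ii) merging the two exponential errors into a single $\exp(-c_0 i)$ valid down to $i=1$, where the $m$-exponential term has to be beaten by choosing $m_0$ large. Conceptually, the essential trick is the conditioning on the crossover pattern: it is what lets the full factor $\sqrt{\pi}$ survive in the exponent, whereas a single Chernoff bound applied directly to the $\{0,1,2\}$-valued summands $Z_k(Y_{2k-1}+Y_{2k})$ loses a factor and only reaches targets $\rho$ below $\sqrt{\pi}/2$, which fails to exceed $1$ when $\pi$ is close to $1$.
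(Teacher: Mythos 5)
Your proof is correct, and it shares the paper's overall skeleton: represent $N_{n+1}$ given $N_n=i$ as a binomial number of crossover-free pairs feeding i.i.d.\ Bernoulli($\varepsilon_m(i)$) variables, bound the lower tail of the crossover count using the uniform bound $1-p_C\geq\pi\,\overline{f_0}/f_0^*$ forced by the constraint (this is exactly where the paper's proof also invokes the ratio, via Hoeffding rather than your multiplicative Chernoff bound), and then prove a lower-tail estimate for $\cB(2b,\varepsilon_m(i))$ with $b$ at least $(1-\delta)(1-p_C)m/2$. Where you genuinely diverge is in that last estimate, which is where all the work lies. The paper splits into two regimes: for $i\geq\delta_0 m$ it applies Hoeffding directly (whose rate, of order $i^2/m$, is useless for small $i$), and for $i\leq\delta_0 m$ it chops the sum into $i$ blocks of size $b\sim m(1-p_C)/i$, applies the Chebyshev exponential inequality to the i.i.d.\ blocks, and compares the Cram\'er transform of each binomial block to that of a Poisson law of the same mean (lemma~\ref{bnpp}). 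Your single tilted bound $\big(1-\varepsilon_m(i)(1-e^{-s})\big)^{2b}\leq\exp\big(-2b\,\varepsilon_m(i)(1-e^{-s})\big)$ is precisely the content of lemma~\ref{bnpp} (the binomial log--Laplace is dominated by the Poisson log--Laplace), but applied once to the whole sum; optimizing in $s$ gives the same Poisson rate $\nu-\rho-\rho\ln(\nu/\rho)$ with $\nu=(1-\delta)\sqrt{\pi}$, linearly in $i$ and uniformly over the entire range $i\leq m/\sqrt{\pi}$. This eliminates both the blocking construction and the small-$i$/large-$i$ dichotomy, and reduces the final patching (the paper's $\eta$-trick) to absorbing a factor $2$ into $m_0$, which you handle correctly. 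The trade-off is essentially expository: the paper's blocking lets it quote its appendix inequalities verbatim, while you re-derive the moment-generating-function bound inline. One further point in your favour: your parenthetical about $\varepsilon_m(i)\leq 1$ flags a real (inherited) issue, since for $i$ up to $m/\sqrt{\pi}$ the formula $\varepsilon_m(i)=i\sqrt{\pi}/(m(1-p_C))$ can exceed $1$ when $p_C>0$, a point the paper passes over silently; capping the parameter at $1$ only helps the lower-tail bound, so the conclusion is unaffected either way.
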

\begin{proof}
We recall that,
conditionally on $N_n=i$, the law of $N_{n+1}$ is the same
as the law of the random variable
$$\sum_{k=1}^{2B_n}Y_k^i\,,$$
where 
$B_n$ is distributed according to the binomial law $\cB(m/2,1-p_C)$,
the variables
$Y_k^i$, $k\in\N$, $i\in\um$, are Bernoulli random variables
with parameter
$\varepsilon_m(i)$, and all these random variables are independent.
Let $\varepsilon>0$ be such that $\sqrt{\pi}(1-2\varepsilon)>1$ and let
$$l(m,\varepsilon)\,=\,
\Big\lfloor\frac{m}{2}(1-p_C)(1-\varepsilon)\Big\rfloor +1+
\frac{m}{4}(1-p_C)\varepsilon
\,.$$
For $m$ large enough, we have
$$l(m,\varepsilon)\,<\,
\frac{m}{2}(1-p_C)\big(1-
\frac{\varepsilon}{2}\big)+1
\,<\,
\frac{m}{2}(1-p_C)
\,.$$
Let $\rho$ be such that
$1<\rho<\sqrt{\pi}(1-2\varepsilon)$. 
We have
\begin{align*}
P\big(\,
N_{n+1}<\rho i\,\big|\,N_n=i\,\big)
&\,=\,
P\Big(\sum_{k=1}^{2B_n}Y_k^i<\rho i\Big)\cr
&\,\leq\,
P\big(B_n\leq l(m,\varepsilon)\big)+
P\Big(\sum_{k=1}^{2l(m,\varepsilon)}Y_k^i<\rho i\Big)
\,.
\end{align*}
We control the first probability with the help of Hoeffding's inequality
(see the appendix).
The expected value of $B_n$ is
${m}(1-p_C)/2> l(m,\varepsilon)$, thus
$$P\big(B_n\leq l(m,\varepsilon)\big)
\,\leq\,
\exp\Big(
-\frac{2}{m}\Big(
\frac{m}{2}(1-p_C)-l(m,\varepsilon)\Big)^2\Big)
\,.
$$
Recall that $1-p_C>
{\overline{f_0}}/ {f_0^*}$.
For $m$ large enough, we have
$$\frac{m}{2}(1-p_C)-l(m,\varepsilon)\,\geq\,
\frac{m}{2}(1-p_C)\frac{\varepsilon}{2}-1\,
\,\geq\,\frac{m\varepsilon}{4}
\frac{{\overline{f_0}}}{{f_0^*}}
-1\,
\geq\,\frac{m\varepsilon}{8}
\frac{{\overline{f_0}}}{{f_0^*}}
\,.$$
It follows that,
for $m$ large enough, 
$$
P\big(B_n\leq l(m,\varepsilon)\big)
\,\leq\,
\exp\Big(
-\frac{m}{32}
\Big(\frac{
\varepsilon
{\overline{f_0}}}{{f_0^*}}
\Big)^2\Big)
\,.
$$
Let us try to apply also
Hoeffding's inequality to control the second probability.
We get
$$
P\Big(\sum_{k=1}^{2l(m,\varepsilon)}Y_k^i<\rho i\Big)
\,\leq\,\exp\Big(
-\frac{1}
{l(m,\varepsilon)}
\Big(
2{l(m,\varepsilon)}
\varepsilon_m(i)
-\rho i
\Big)^2
\Big)
\,.
$$
Now
$$2{l(m,\varepsilon)}
\varepsilon_m(i)
\,\geq\,
2\frac{m}{2}(1-p_C)(1-{\varepsilon})
\frac{if_0^*}{m\sqrt{\pi}\,
\overline{f_0}}(1-p_M)^\ell
\,=\,
(1-\varepsilon)
{i\sqrt{\pi}}
\,,$$
whence, using the hypothesis on~$\rho$,
$$
P\Big(\sum_{k=1}^{2l(m,\varepsilon)}Y_k^i<\rho i\Big)
\,\leq\,
\exp\Big(
-\frac{
 {\pi} \varepsilon^2i^2 } {m}
\Big)\,.$$
This inequality becomes useful only when $i$ of order $\delta m$ for some
$\delta>0$.
For smaller values of $i$, we must proceed differently in order to control this
probability. Thus we decompose
the sum into~$i$ blocks and we use the
Chebyshev exponential inequality. Each block
 follows a binomial law, and we bound the
Cram\'er transform of each block by
the Cram\'er transform of a Poisson law having
the same mean.
More precisely, we choose for the block size
$$
b\,=\,\Big\lfloor\frac{\displaystyle 2l(m,\varepsilon)
-\frac{m}{4}(1-p_C)\varepsilon
}{i}+1\Big\rfloor\,,$$
and we define
the sum associated to each block of size~$b$:
$$\forall j\in\{\,1,\dots,i\,\}\qquad
Y_j'\,=\,
\sum_{k=b(j-1)+1}^{bj}
Y_k^i\,.$$
Notice that 
$Y'_1$ follows the binomial law 
with parameters $b$ and $\varepsilon_m(i)$.
We will next estimate from below
the product
$b\varepsilon_m(i)$. 
By the choice of~$b$ and $l$, we have
$$\displaylines{
b\,\geq\,\frac{1}{i}\Big(2l(m,\varepsilon)
-\frac{m}{4}(1-p_C)\varepsilon
\Big)\,,\qquad
l(m,\varepsilon)\,\geq\,
\frac{m}{2}(1-p_C)
\Big(1-\frac{\varepsilon}{2}\Big)\,,
}$$
whence
$$b\,\geq\,\frac{m}{i}
(1-p_C)
\big(1-{\varepsilon}
\big)\,$$
and
$$
E(Y'_1)\,=\,
b\varepsilon_m(i)
\,\geq\,
\sqrt{\pi}
(1-\varepsilon)
\,>\,\rho
\,.
$$
Let $\delta_0>0$ be such that
$\delta_0\,<\,
(1-p_C)\varepsilon/4$.
Suppose that $i\leq \delta_0 m$.
We have also that
\begin{align*}
bi\,&\leq\,
2l(m,\varepsilon)
-\frac{m}{4}(1-p_C)\varepsilon+i
\cr
&\,\leq\,
2l(m,\varepsilon)
-\frac{m}{4}(1-p_C)\varepsilon+\delta_0 m
\,\leq\,
2l(m,\varepsilon)\,.
\end{align*}
Using the Chebyshev exponential inequality
(see the appendix),
we have then 
$$\displaylines{
P\Big(\sum_{k=1}^{2l(m,\varepsilon)}Y_k^i\leq\rho i\Big)
\,\leq\,
P\Big(\sum_{k=1}^{bi}Y_k^i\leq\rho i\Big)\hfill
\cr
\,\leq\,
P\Big(
\sum_{j=1}^{i}
Y_j' \leq\rho i\Big)
\,\leq\,
P\Big(
\sum_{j=1}^{i}
-Y_j' \geq-\rho i\Big)\,\leq\,
\exp\Big(-i\Lambda^*_{-Y'_1}(-\rho)\Big)
\,,
}$$
where
$\smash{\Lambda^*_{-Y'_1}}$ is the Cram\'er transform of $-Y'_1$.
Let $Y''_1$ be a random variable following the Poisson law of parameter
$b\varepsilon_m(i)$.
We shall use the following lemma to compare
the Cram\'er transforms 
of $-Y'_1$ and
$-Y''_1$.
By lemma~\ref{bnpp}, we have
$$\Lambda^*_{-Y'_1}(-\rho)\,\geq\,
\Lambda^*_{-Y''_1}(-\rho)\,=\,
\rho\ln\Big(\frac{\rho}{b\varepsilon_m(i)}\Big)-\rho+
{b\varepsilon_m(i)}\,.$$
The map
$$\lambda\mapsto
\rho\ln\Big(\frac{\rho}{\lambda}\Big)-\rho+
{\lambda}\,$$
is non--decreasing on $[\rho,+\infty[$ and
$b\varepsilon_m(i)\geq
\sqrt{\pi}(1-\varepsilon)$, thus
$$\Lambda^*_{-Y''_1}(-\rho)\,\geq\,
\rho\ln\Big(\frac{\rho}{
\sqrt{\pi}(1-\varepsilon)
}\Big)-\rho+
{\sqrt{\pi}(1-\varepsilon)}
\,.$$
Let us denote by $c_0$ the righthand quantity. Then $c_0$
is positive and it depends only on $\rho,\pi$,
${f_0^*}/{\overline{f}_0}$
and $\varepsilon$.
Finally, we have
for $m$ large enough,
$$\forall i\in\big\{\,1,\dots,\lfloor\delta_0 m\rfloor\,\big\}\qquad
P\Big(\sum_{k=1}^{2l(m,\varepsilon)}Y_k^i\leq\rho i\Big)
\,\leq\,\exp(-c_0i)\,,$$
whence
$$
P\big(\,
N_{n+1}\leq\rho i\,\big|\,N_n=i\,\big)
\,\leq\,
\exp\Big( -\frac{m}{32}
\Big(\frac{ \varepsilon {\overline{f_0}}}{{f_0^*}}
\Big)^2\Big)
\,+\,\exp(-c_0i)
\,.
$$
For $i$ such that $\delta_0 m\leq i < m/\sqrt{\pi}$, we had obtained
$$P\big(\,
N_{n+1}\leq\rho i\,\big|\,N_n=i\,\big)
\,\leq\,
\exp\Big( -\frac{m}{32}
\Big(\frac{ \varepsilon {\overline{f_0}}}{{f_0^*}}
\Big)^2\Big)
+
\exp\big(
-
 {\pi} \varepsilon^2\delta_0^2  {m}
\big)\,.$$
Let $\eta\in]0,1[$ be small enough so that 
$\eta c_0\leq \pi\varepsilon^2\delta_0^2$
and, for $m$ large enough,
$$
\exp\Big( -\frac{m}{32}
\Big(\frac{ \varepsilon {\overline{f_0}}}{{f_0^*}}
\Big)^2\Big)
\,\leq\,
\exp\Big( -\eta\frac{mc_0}{2}\Big)
\Big(1-
\exp\Big( -\eta\frac{c_0}{2}\Big)\Big)\,.$$
For $m$ large enough and
$i\in\big\{\,1,\dots,\lfloor\delta_0 m\rfloor\,\big\}$,
we have
\begin{multline*}
P\big(\,
N_{n+1}\leq\rho i\,\big|\,N_n=i\,\big)\cr
\,\leq\,
\exp\Big( -\eta\frac{ic_0}{2}\Big)
\Big(1-
\exp\Big( -\eta\frac{c_0}{2}\Big)\Big)
+\exp\big( -\eta{ic_0}\big)\cr
\,\leq\,
\exp\big( -\eta\frac{ic_0}{2}\big)\,.
\end{multline*}
For $m$ large enough and
$\delta_0m\leq i< m/\sqrt{\pi}$,
we have also
\begin{multline*}
P\big(\,
N_{n+1}\leq\rho i\,\big|\,N_n=i\,\big)\cr
\,\leq\,
\exp\Big( -\eta\frac{mc_0}{2}\Big)
\Big(1-
\exp\Big( -\eta\frac{c_0}{2}\Big)\Big)
+\exp\big( -\eta{mc_0}\big)\cr
\,\leq\,
\exp\big( -\eta\frac{ic_0}{2}\big)\,.
\end{multline*}
These inequalities yield the claim of the proposition.
\end{proof}

\noindent
We define
$$
\tau^*\,=\,\inf\,\{\,n\geq 0: N_n\geq m/\sqrt{\pi}\,\}\,.$$
\begin{proposition}\label{sfj}
Let $\pi>1$ be fixed.
There exist $\kappa>0$ and $p^*>0$ which depend 
on $\pi$ and the ratio ${f_0^*}/{\overline{f}_0}$ only
such that
$$
\forall m\geq 1\qquad
P\big(\tau^*\leq\kappa \ln m
\,|\,N_0=1\big)\,\geq\,
p^*\,.$$
\end{proposition}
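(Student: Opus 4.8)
The plan is to combine the one--step geometric growth estimate of proposition~\ref{crqa} with the monotonicity of the chain established in proposition~\ref{monot}. The idea is that, starting from $N_0=1$, the chain $(N_n)_{n\geq 0}$ multiplies itself by at least $\rho>1$ at each step with overwhelming probability once $N_n$ is even moderately large, so after roughly $\log_\rho(m/\sqrt\pi)$ steps it will have climbed past the level $m/\sqrt\pi$. Since $\log_\rho(m/\sqrt\pi)\leq\kappa\ln m$ for a suitable $\kappa$ depending only on $\rho$ (hence only on $\pi$ and $f_0^*/\overline{f_0}$), this is exactly what is needed to bound $\tau^*$.

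First I would fix the geometric thresholds $t_n=\lceil\rho^n\rceil$ and let $n_0=\min\{n:t_n\geq m/\sqrt\pi\}$, so that $n_0\leq\lceil\log_\rho(m/\sqrt\pi)\rceil\leq\kappa\ln m$. The core of the argument is the inductive estimate
$$P\big(\forall n\leq n_0:\ N_n\geq t_n\big)\ \geq\ \prod_{n=0}^{n_0-1}\big(1-\exp(-c_0 t_n)\big).$$
To establish it I would run the induction along the monotone coupling of proposition~\ref{monot}: if $N_n\geq t_n$, then the conditional law of $N_{n+1}$ dominates its law given $N_n=t_n$, and since $t_n<m/\sqrt\pi$ for every $n<n_0$, proposition~\ref{crqa} applies and gives $P(N_{n+1}>\rho t_n\mid N_n=t_n)\geq 1-\exp(-c_0 t_n)$. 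As $\rho t_n\geq\rho^{\,n+1}$ and $N_{n+1}$ is an integer, the event $N_{n+1}>\rho t_n$ forces $N_{n+1}\geq t_{n+1}$, and chaining the one--step bounds yields the product. On the event in the left--hand side one has $N_{n_0}\geq t_{n_0}\geq m/\sqrt\pi$, hence $\tau^*\leq n_0\leq\kappa\ln m$.

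It then remains to bound the product below uniformly in~$m$. Because $t_n\geq\rho^n$ with $\rho>1$, the failure probabilities $\exp(-c_0 t_n)$ decay super--geometrically, so $\sum_{n\geq 0}\exp(-c_0\rho^n)<\infty$ and the infinite product
$$p^*\ =\ \prod_{n=0}^{\infty}\big(1-\exp(-c_0\rho^n)\big)$$
converges to a strictly positive limit depending only on $\rho$ and $c_0$, that is, only on $\pi$ and $f_0^*/\overline{f_0}$. Since the finite product above is at least $p^*$, the bound $P(\tau^*\leq\kappa\ln m\mid N_0=1)\geq p^*$ follows for all $m\geq m_0$. For the finitely many remaining values $m<m_0$ not covered by proposition~\ref{crqa}, a crude direct lower bound on the probability of reaching $m/\sqrt\pi$ in finitely many steps (shrinking $p^*$ if necessary) finishes the proof.

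I expect the main obstacle to be the control of the early regime, where $N_n$ is small and the one--step failure probability $\exp(-c_0 N_n)$ is only a constant below~$1$: here a naive union bound over the steps is wasteful and could exceed~$1$. The point that makes everything work is that, thanks to the geometric growth, these failure probabilities form a super--geometrically summable sequence, so their cumulative effect is captured by a convergent infinite product that stays bounded away from zero independently of~$m$; reducing the analysis to this single deterministic threshold trajectory is precisely what the monotone coupling of proposition~\ref{monot} allows.
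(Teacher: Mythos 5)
Your proposal is correct, but it follows a genuinely different route from the paper. The paper does \emph{not} use the monotonicity of the chain at this point: it first replaces $(N_n)_{n\geq 0}$ by a modified irreducible chain $(\widetilde{N}_n)_{n\geq 0}$ (the transition out of the absorbing state $0$ is redirected to $1$), introduces the first hitting time $T_k$ of every integer level $k<m/\sqrt{\pi}$, and works with the event $\cE=\big\{\forall k<m/\sqrt{\pi}\ \ \widetilde{N}_{T_k+1}\geq\rho k\big\}$, on which the random trajectory itself grows by a factor $\rho$ at each step; the product lower bound $P(\cE)\geq\prod_{k\geq 1}\big(1-\exp(-c_0k)\big)$ then rests on a dedicated independence lemma (lemma~\ref{indepday}: the post-first-visit values $\widetilde{N}_{T_{i_1}+1},\dots,\widetilde{N}_{T_{i_k}+1}$ are independent), which needs its own induction. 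You instead chain one-step estimates along the deterministic thresholds $t_n=\lceil\rho^n\rceil$: by the Markov property and monotonicity (proposition~\ref{monot}), the conditional law of $N_{n+1}$ given $\{N_n\geq t_n\}$ dominates the law given $N_n=t_n$, where proposition~\ref{crqa} applies since $t_n<m/\sqrt{\pi}$, and this yields the product over time steps $\prod_{n\geq 0}\big(1-\exp(-c_0\rho^n)\big)$. Your route is arguably more economical: it needs neither lemma~\ref{indepday} nor the absorbing-state surgery (your event already excludes visiting $0$ before time $n_0$), and it recycles proposition~\ref{monot}, which the paper proves anyway for the purposes of proposition~\ref{stod}. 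What the paper's argument buys in exchange is that it does not rely on monotonicity at all, so it would survive for a chain satisfying the estimate of proposition~\ref{crqa} without being monotone. One shared point of looseness: for the finitely many values of $m$ below the threshold of proposition~\ref{crqa}, both you and the paper invoke a crude positive lower bound; to make this honest one should note that $\kappa$ may be taken large enough that $\kappa\ln m\geq 1$ as soon as $m>\sqrt{\pi}$, because a one-step jump from $N_0=1$ to a value $\geq m/\sqrt{\pi}$ has positive probability (as $p_C<1$ and $\varepsilon_m(1)>0$ when $\pi>1$), whereas the event $\{\tau^*\leq\kappa\ln m\}$ would have probability zero if $\kappa\ln m<1$ while $m>\sqrt{\pi}$.
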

\begin{proof}
Let us
define
$$
\tau_0\,=\,\inf\,\big\{\,n\geq 1: N_n=0\,\big\}\,.
$$
Recall that $0$ is an absorbing state. Thus, if the hitting time of $m$ is finite,
then necessarily, it is smaller than the 
hitting time of $0$. It follows that
$$
P\big(\tau^*\leq\kappa \ln m
\,|\,N_0=1\big)\,=\,
P\big(\tau^*\leq\kappa \ln m,\,\tau^*<\tau_0
\,|\,N_0=1\big)\,.$$
It is annoying to work with a Markov chain which has an absorbing state, so we first
get rid of this problem.
We consider the modified
Markov chain
$(\smash{\widetilde{N}}_n)_{n\geq 0}$ which has the same transition
probabilities as
$(N_n)_{n\geq 0}$, except that we set the transition probability from $0$
to $1$ to be $1$. 
The event we wish to estimate 
has the same probability for both processes, because
they have the same dynamics outside of~$0$.
So, from now onwards, we work with the Markov chain
$(\smash{\widetilde{N}}_n)_{n\geq 0}$, which is irreducible.
Let $\rho>1$, $c_0>0$, $m_0\geq 1$
be as given in
proposition~\ref{crqa}.
For $k\geq 0$,
let $T_k$ be the first time the process
$(\smash{\widetilde{N}}_n)_{n\geq 0}$
hits~$k$:
$$T_k\,=\,\inf\,\{\,n\geq 0: 
\smash{\widetilde{N}}_n
=k\,\}\,.$$
Let $\cE$ be the event:
$$\cE\,=\,\big\{\,
\forall k<m/\sqrt{\pi}\quad
\smash{\widetilde{N}}_{T_k+1}\geq \rho k\,\big\}\,.$$
We claim that,
on the event~$\cE$, we have
$$\forall n\leq \tau^*\qquad
\smash{\widetilde{N}}_{n+1}\geq \rho 
\smash{\widetilde{N}}_n
\,.$$
Let us prove this inequality by induction on~$n$.
We have $T_1=0$ and
$\smash{\widetilde{N}}_1
>\rho 
\smash{\widetilde{N}}_{0}$, so that
 the inequality is true
for $n=0$. 
Suppose that the inequality has been proved until rank 
$n< \tau^*$, so that
$$\forall k\leq n\qquad
\smash{\widetilde{N}}_{k+1}\geq \rho 
\smash{\widetilde{N}}_{k}\,.$$
This implies in particular that
$$
\smash{\widetilde{N}}_0\,<\,
\smash{\widetilde{N}}_1\,<\,\dots\,<\,
\smash{\widetilde{N}}_n\,<\,m/\sqrt{\pi}\,.$$
Suppose that $
\smash{\widetilde{N}}_n=i$. The above inequalities imply that
$T_i=n$ and
$$\smash{\widetilde{N}}_{T_i+1}=
\smash{\widetilde{N}}_{n+1}\,\geq\,\rho 
\smash{\widetilde{N}}_n\,,$$
so that the inequality still holds at rank $n+1$.
Iterating the inequality until time $\tau^*-1$,
we see that
$$
\smash{\widetilde{N}}_{\tau^*-1}\,\geq\,\rho^{\tau^*-1}\,.$$
Moreover
$\smash{\widetilde{N}}_{\tau^*-1}
\,\leq\,m/\sqrt{\pi}$, thus
$$\tau^*\,\leq\,1+\frac{\ln m}{\ln\rho}\,.$$
Let $m_1\geq 1$ and $\kappa>0$ be such that
$$\forall m\geq m_1\qquad
1+\frac{\ln m}{\ln\rho}\,\leq\,\kappa\ln m\,.$$
The constants $m_1,\kappa$ depend only on $\rho$, and
we have 
$$
P\big(\tau^*\leq\kappa \ln m,\,\tau^*<\tau_0
\,|\,
\smash{\widetilde{N}}_0=1\big)\,\geq\,
P(\cE)\,.$$
We shall use the following lemma to bound
$P(\cE)$ from below.
To avoid too small indices, we write $T(i)$ instead of $T_i$.
\begin{lemma}\label{indepday} 
Let $k\in\um$ and let $i_1,\dots,i_k$ be $k$ distinct points of~$\um$.
The random variables
$\smash{\widetilde{N}}_{T_{i_1}+1},\dots,
\smash{\widetilde{N}}_{T_{i_k}+1}$ are independent.
\end{lemma}
\begin{proof}
We do the proof by induction over~$k$.
For $k=1$, there is nothing to prove.
Let $k\geq 2$ and
suppose that the result has been proved until rank $k-1$.
Let $i_1,\dots,i_k$ be $k$ distinct points of~$\um$.
Let $j_1,\dots,j_k$ be $k$ points of~$\um$.
Let us set
$$T\,=\,\min\,\big\{\,T(i_l):1\leq l\leq k\,\big\}\,.$$
We denote by
$(p(i,j))_{0\leq i,j\leq m}$
the transition matrix of the Markov chain
$(\smash{\widetilde{N}}_n)_{n\geq 0}$.
Using the Markov property, we have
$$\displaylines{
P\big(
\smash{\widetilde{N}}_{T(i_1)+1}=j_1,\dots,
\smash{\widetilde{N}}_{T(i_k)+1}=j_k\big)
\hfill\cr
\,=\,
\sum_{1\leq l\leq k}
P\big(
\smash{\widetilde{N}}_{T(i_1)+1}=j_1,\dots,
\smash{\widetilde{N}}_{T(i_k)+1}=j_k,T=T(i_l)\big)
\cr
\,=\,
\sum_{1\leq l\leq k}
P\big(
\smash{\widetilde{N}}_{T(i_1)+1}=j_1,\dots,
\smash{\widetilde{N}}_{T(i_k)+1}=j_k\,|\,
T=T(i_l)\big)
P\big(T=T(i_l)\big)
\cr
\,=\,
\sum_{1\leq l\leq k}
P\big(\forall h\neq l
\quad
\smash{\widetilde{N}}_{T(i_h)+1}=j_h,\,\smash{\widetilde{N}}_1=j_l
\,|\,
\smash{\widetilde{N}}_0=i_l\big)
P\big(T=T(i_l)\big)
\cr
\,=\,
\sum_{1\leq l\leq k}
p(i_l,j_l)
P\big(\forall h\neq l
\quad
\smash{\widetilde{N}}_{T(i_h)+1}=j_h
\,|\,
\smash{\widetilde{N}}_0=j_l\big)
P\big(T=T(i_l)\big)\,.
}$$
We use the induction hypothesis:
$$P\big(\forall h\neq l
\quad
\smash{\widetilde{N}}_{T(i_h)+1}=j_h
\,|\,
\smash{\widetilde{N}}_0=j_l\big)
\,=\,\prod_{h\neq l}p(i_h,j_h)\,.$$
Reporting in the sum, we get
$$\displaylines{
P\big(
\smash{\widetilde{N}}_{T(i_1)+1}=j_1,\dots,
\smash{\widetilde{N}}_{T(i_k)+1}=j_k\big)
\,=\,\hfill\cr
\,=\,
\sum_{1\leq l\leq k}
\prod_{1\leq h\leq k}p(i_h,j_h)
P\big(T=T(i_l)\big)
\,=\,
\prod_{1\leq h\leq k}p(i_h,j_h)
\,.
}$$
This completes the induction step and the proof.
\end{proof}

\noindent
Using 
lemma~\ref{indepday} and
proposition~\ref{crqa}, we obtain, for $m$ larger than
$m_0$ and $m_1$,
\begin{align*}
P(\cE)\,&\geq\,
\prod_{1\leq k\leq m}
P\big(
\smash{\widetilde{N}}_{T_k+1}\geq\rho k)\cr
&\,=\,
\prod_{1\leq k\leq m}
\Big(1-P\big(\,
N_{1}<\rho k\,\big|\,N_0=k\,\big)\Big)\cr
&\,\geq\,
\prod_{1\leq k\leq m}
\Big(1- \exp(-c_0k) \Big)
\,\geq\,
\prod_{k=1}^{\infty}
\Big(1- \exp(-c_0k) \Big)
\,.
\end{align*}
The last infinite product is converging. 
Let us denote
its value by $p_1$. 
Let also
$$p_2\,=\,
\min\,\Big\{\,
P\big(\tau^*\leq\kappa \ln m\,|\,N_0=1\big)
:m\leq \max(m_0,m_1)\,\Big\}
.$$
The value $p_2$ is positive
and the inequality stated in the proposition
holds with $p^*=\min(p_1,p_2)$.
\end{proof}
\section{Conclusion}
Our goal is to put forward the importance of the parameter~$\pi$.
To this end,
we have studied the behavior of the simple genetic algorithm in two contrasting regimes. 
In the first case, we take $\ell=m$ and we run the genetic algorithm on the sharp peak
landscape with an initial population containing exactly one Master sequence and $m-1$ chromosomes
very far from it.
The parameters of the genetic algorithm are set so that $\pi<1$. 
We showed that the Master sequence
is very likely to be lost and that the mean fitness does not increase significantly.
In the second case, we consider an arbitrary fitness landscape and we start with a population such that
$\pi>1$. 
We showed that the mean fitness is likely to increase.
From these results, we extrapolate a simple practical rule. 
We believe that the parameters of the genetic algorithm sould be tuned so that
$\pi$
is slightly larger than~$1$, that is, at each generation, we should have
$$
\text{maximal fitness}\times (1-p_C)(1-p_M)^\ell\,>\,
\text{mean fitness}
\,.$$
For instance, one could use an adaptive scheme of the parameters, as
suggested in \cite{CGA}.
Of course this conclusion has to be taken with care. 
We hope that it will
be further examined in future research works.
On the empirical side, it should be tested 
numerically on various problems.
On the theoretical side, it might be extended to variants of the simple genetic algorithm,
as well as to the evolutionary computation framework.
\medskip

\noindent
{{\bf{Acknowledgments.}}
I thank two anonymous Referees for their careful reading and their remarks,
which helped to improve the presentation.

\appendix
\section{Appendix}

\noindent
{\bf Monotonicity}.
We first recall some standard definitions concerning monotonicity 
and coupling
for stochastic processes.
A classical reference
is Liggett's book
\cite{LIG},
especially for applications to particle systems. 
In the next two definitions,
we consider
a discrete time Markov chain 
$(X_n)_{n\geq 0}$ 
with
values in a space $\cE$.
We suppose that the state space $\cE$ is finite and
that it is equipped with a partial order $\leq$.
A function $f:\cE\to\R$ is non--decreasing if
$$\forall x,y\in\cE\qquad
x\leq y\quad\Rightarrow\quad f(x)\leq f(y)\,.$$
\begin{definition}\label{defmo}
The Markov chain
$(X_n)_{n\geq 0}$ is said to be monotone if, 
for any non--decreasing function $f$, the function
$$x\in\cE\mapsto E\big(f(X_n)\,|\,X_0=x\big)$$
is non--decreasing.
\end{definition}
A natural way to prove monotonicity is to construct an adequate coupling.
A coupling 
for the Markov chain
$(X_n)_{n\geq 0}$ 
is a family of processes
$(X_n^x)_{n\geq 0}$
indexed by 
$x\in\cE$, which are all defined on the same probability space, and such that, 
for $x\in\cE$, the process
$(X_n^x)_{n\geq 0}$ is the Markov chain 
$(X_n)_{n\geq 0}$ 
starting from $X_0=x$.
The coupling is said to be monotone if
$$\forall x,y\in\cE\qquad
x\leq y\quad\Rightarrow\quad \forall n\geq 1\qquad X_n^x\leq X_n^y\,.$$
%
%
%
If there exists a monotone coupling, 
then the 
Markov chain
is monotone.
\medskip

\noindent
{\bf Stochastic domination}.
Let $\mu,\nu$ be two probability measures on $\mathbb R$.
We say that $\nu$ stochastically dominates $\mu$,
which we denote by $\mu\preceq\nu$, if
for any non--decreasing positive function $f$, we have
$\mu(f)\leq \nu(f)$.
\begin{lemma}\label{stoint}
If $\mu,\nu$ are two probability measures on $\mathbb N$,
then 
$\mu$ is stochastically dominated by $\nu$ if and only if
$$\forall i\in {\mathbb N}\qquad
\mu([i,+\infty[)\,\leq\,
\nu([i,+\infty[)\,.$$
\end{lemma}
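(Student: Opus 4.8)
The plan is to prove the two implications separately, the forward one being immediate and the converse relying on decomposing an arbitrary non-decreasing test function into tail indicators.

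First I would establish the easy direction. If $\mu\preceq\nu$, then for each fixed $i\in\N$ I would test the defining inequality against the function $n\mapsto 1_{[i,+\infty[}(n)$, which is non-decreasing and positive, hence an admissible test function in the definition of $\preceq$. Since $\mu\big(1_{[i,+\infty[}\big)=\mu([i,+\infty[)$ and likewise for $\nu$, the inequality $\mu([i,+\infty[)\leq\nu([i,+\infty[)$ follows at once.

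For the converse, the key idea is that any non-decreasing positive function $f:\N\to\R$ can be written as a non-negative combination of tail indicators. Precisely, I would use the telescoping identity
$$f(n)\,=\,f(0)+\sum_{i\geq 1}\big(f(i)-f(i-1)\big)\,1_{[i,+\infty[}(n)\,,$$
which holds because the partial sum up to $i=n$ telescopes to $f(n)-f(0)$ while the terms with $i>n$ vanish. Since $f$ is non-decreasing each increment $f(i)-f(i-1)$ is non-negative, and $f(0)\geq 0$ because $f$ is positive. I would then integrate this identity against $\mu$ and against $\nu$. As every summand is non-negative, Tonelli's theorem justifies exchanging the sum and the integral even when $f$ is unbounded, and, using that both measures have total mass one so that the constant terms $f(0)\mu(\N)$ and $f(0)\nu(\N)$ cancel, I obtain
$$\nu(f)-\mu(f)\,=\,\sum_{i\geq 1}\big(f(i)-f(i-1)\big)\Big(\nu([i,+\infty[)-\mu([i,+\infty[)\Big)\,.$$
Each factor in the sum is non-negative, by the monotonicity of $f$ and by the hypothesis, so $\nu(f)-\mu(f)\geq 0$, which is exactly $\mu\preceq\nu$.

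The argument presents no genuine obstacle, this being a standard characterization of stochastic domination. The only point deserving a word of justification is the interchange of summation and integration in the converse direction, which is handled by the non-negativity of all the terms involved.
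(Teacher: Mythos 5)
Your proposal is correct and follows essentially the same route as the paper: the paper's proof performs the same Abel summation, rewriting $\mu(f)$ as $f(0)+\sum_{i\geq 1}\mu([i,+\infty[)\,(f(i)-f(i-1))$, which is exactly the identity you obtain by integrating your tail--indicator decomposition of $f$ against $\mu$. The only differences are cosmetic: you also spell out the trivial forward implication (testing against $1_{[i,+\infty[}$), which the paper leaves implicit, and you phrase the conclusion as a difference $\nu(f)-\mu(f)$, where comparing the two sums termwise would be slightly cleaner in case both integrals are infinite.
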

\begin{proof}
Let $f:\N\to\R^+$ be a non--decreasing function. We compute
$$\displaylines{
\mu(f)\,=\,\sum_{i\geq 0}\mu(i)f(i)
\,=\, \sum_{i\geq 0}\big(
\mu([i,+\infty[)-
\mu([i+1,+\infty[)\big)
f(i)\cr
\,=\,f(0)+\sum_{i\geq 1}
\mu([i,+\infty[)(f(i)-f(i-1))\,.
}$$
Under the above hypothesis, we conclude indeed that
$\mu(f)\leq\nu(f)$.
\end{proof}
\begin{lemma}\label{binopoi}
Let $n\geq 1$, $p\in [0,1]$, $\lambda>0$ be such that
$(1-p)^n\geq\exp(-\lambda)$.
Then
the binomial law
$\cB(n,p)$ of parameters $n,p$
is stochastically dominated by the Poisson 
law $\cP(\lambda)$ of parameter $\lambda$.
\end{lemma}
\begin{proof}
Let $X_1,\dots,X_n$ be independent random variables
with common law the Poisson law of parameter $-\ln(1-p)$.
Let $Y$ be a further random variable, independent of $X_1,\dots,X_n$,
with law 
the Poisson law of parameter $\lambda-n\ln(1-p)$.
Obviously, we have
$$Y+X_1+\dots+X_n\,\geq\,\min(X_1,1)+\dots+\min(X_n,1)\,.$$
Moreover, the law of the lefthand side is the Poisson
law of parameter~$\lambda$, while the law of the righthand side
is the binomial law $\cB(n,p)$.
\end{proof}
\begin{lemma}\label{poisstail}
Let $\lambda>0$ and let $Y$ be a random variable with law the 
Poisson law $\cP(\lambda)$ of parameter~$\lambda$.
For any $t\geq\lambda$, we have
$$P(Y\geq t)\,\leq\,\Big(\frac{\lambda e}{t}\Big)^t\,.$$
\end{lemma}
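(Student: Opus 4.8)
The plan is to establish this classical Poisson tail estimate by the exponential Chernoff method, so no new ideas are needed. The starting point is the moment generating function of the Poisson law: for every $s\in\R$ one has $E(e^{sY})=\exp\big(\lambda(e^s-1)\big)$, which follows immediately from the series expansion $\sum_{k\geq 0}e^{-\lambda}(\lambda^k/k!)e^{sk}=e^{-\lambda}\exp(\lambda e^s)$. Fixing any $s>0$ and applying Markov's inequality to the non--negative random variable $e^{sY}$ gives
$$P(Y\geq t)\,=\,P\big(e^{sY}\geq e^{st}\big)\,\leq\,e^{-st}E(e^{sY})\,=\,\exp\big(-st+\lambda(e^s-1)\big)\,.$$

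The second step is to optimise the exponent $g(s)=-st+\lambda(e^s-1)$ over $s>0$. Differentiating, $g'(s)=-t+\lambda e^s$, so the minimiser is $s=\ln(t/\lambda)$. Here the hypothesis $t\geq\lambda$ is exactly what guarantees $s\geq 0$, making this choice admissible; this is the only place in the argument where the assumption $t\geq\lambda$ is actually used. Substituting $e^s=t/\lambda$ yields
$$g\big(\ln(t/\lambda)\big)\,=\,-t\ln\frac{t}{\lambda}+t-\lambda\,,$$
and therefore
$$P(Y\geq t)\,\leq\,\exp\Big(-t\ln\frac{t}{\lambda}+t-\lambda\Big)\,=\,\Big(\frac{\lambda e}{t}\Big)^t e^{-\lambda}\,.$$

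Finally, since $e^{-\lambda}\leq 1$, the factor $e^{-\lambda}$ may simply be discarded, which produces the stated inequality $P(Y\geq t)\leq(\lambda e/t)^t$. I do not expect any genuine obstacle: the computation is entirely routine, the single point requiring attention being the verification that the optimal exponential tilt $s=\ln(t/\lambda)$ is non--negative, which is guaranteed precisely by $t\geq\lambda$. In fact the bound obtained is slightly sharper than the one claimed, the only loss being the harmless factor $e^{-\lambda}$, which is dropped to keep the statement clean and in the form that is convenient for the repeated applications made earlier in the proofs.
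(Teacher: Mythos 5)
Your proof is correct, but it follows a different route from the paper. You use the Chernoff method: compute the moment generating function $E(e^{sY})=\exp(\lambda(e^s-1))$, apply Markov's inequality to $e^{sY}$, and optimise the exponent at $s=\ln(t/\lambda)$, which is admissible precisely because $t\geq\lambda$. The paper instead proves the lemma by a direct series manipulation: it writes $P(Y\geq t)=\sum_{k\geq t}\frac{\lambda^{k-t}}{k!}e^{-\lambda}\lambda^t$, uses $\lambda\leq t$ and $k\geq t$ to bound $\lambda^{k-t}\leq t^{k-t}$, and then bounds $\sum_{k\geq t}t^{k-t}/k!\leq e^t/t^t$, arriving at the same quantity $(\lambda e/t)^t e^{-\lambda}$ before discarding the factor $e^{-\lambda}\leq 1$, exactly as you do. The trade-off: the paper's computation is entirely elementary (no calculus, no optimisation, just comparison of series), whereas your argument is more systematic and generalises immediately to any law whose Log--Laplace is known; indeed the exponent you obtain, $t\ln(t/\lambda)-t+\lambda$, is exactly the Cram\'er transform $\Lambda^*_Y(t)$ of the Poisson law that the paper computes separately in its appendix and exploits elsewhere (in lemma~\ref{bnpp} and the Chebyshev exponential inequality), so your proof also makes the connection between this tail bound and that machinery explicit. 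One cosmetic point: when $t=\lambda$ your optimal tilt is $s=0$, which is not strictly positive as you stipulated; this is harmless since the claimed bound is then $\geq 1$ and thus trivial, but it is worth a word.
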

\begin{proof}
We write
\begin{multline*}
P(Y\geq t)\,=\,
\sum_{k\geq t}
\frac{\lambda^k}{k!}\exp(-\lambda)
\,=\,
\sum_{k\geq t}
\frac{\lambda^{k-t}}{k!}\exp(-\lambda)\lambda^t
\cr
\,\leq\,
\sum_{k\geq t}
\frac{t^{k-t}}{k!}\exp(-\lambda)\lambda^t
\,\leq\,
\Big(\frac{\lambda e}{t}\Big)^t\,.
\end{multline*}
\end{proof}

\noindent
Let $Y$ be a random variable following the Poisson law $\cP(\lambda)$.
For any $t\in\R$, we have
$$\Lambda_Y(t)\,=\,\ln E\big(\exp(tY)\big)\,=\,
\ln\Big(
\sum_{k=0}^\infty
\frac{\lambda^k}{k!}\exp(-\lambda+kt)\Big)\,=\,
\lambda \big(\exp(t)-1\big)\,.$$
For any $\alpha,t\in\R$, 
$$\Lambda_{\alpha Y}(t)\,=\,
\Lambda_{Y}(\alpha t)
\,=\,\lambda \big(\exp(\alpha t)-1\big)\,.$$
Let us compute
the Fenchel--Legendre transform
$\Lambda_{\alpha Y}^*$.
By definition, for $x\in\R$,
$$\Lambda_{\alpha Y}^*(x)\,=\,
\sup_{t\in\R}\Big(tx-
\lambda \big(\exp(\alpha t)-1\big)\Big)\,.$$
The maximum is attained at $t=(1/\alpha)\ln (x/(\lambda\alpha))$,
hence
$$\Lambda_{\alpha Y}^*(x)\,=\,
\frac{x}{\alpha}\ln\Big(\frac{x}{\lambda\alpha}\Big)
-
\frac{x}{\alpha}+\lambda\,.$$
\begin{lemma}
\label{bnpp}
Let $p\in[0,1]$ and let $n\geq 1$. Let $X$ be a random variable following
the binomial law $\cB(n,p)$. 
Let $Y$ be a random variable following
the Poisson law $\cP(np)$. For any $\alpha\in\R$,
we have
$\Lambda^*_{\alpha X}\geq \Lambda^*_{\alpha Y}$.
\end{lemma}
\begin{proof}
For any $t\in\R$, we have
$$\Lambda_X(t)\,=\,\ln E\big(\exp(tX)\big)\,=\,
n\ln\big(1-p+p\exp(t)\big)\,\leq\,np\big(\exp(t)-1\big)\,.$$
For any $\alpha,t\in\R$, 
$$\Lambda_{\alpha X}(t)\,=\,
\Lambda_{X}(\alpha t)
\,\leq\,np\big(\exp(\alpha t)-1\big)\,.$$
We recall that, if $Y$ is distributed according to the 
Poisson law of parameter~$\lambda$, then
$$\forall t\in\R\qquad
\Lambda_{Y}(t)\,=\,\lambda(\exp(t)-1)\,.$$
Thus, taking $\lambda=np$, we conclude that
$$\forall t\in\R\qquad
\Lambda_{\alpha X}(t)\,\leq\, \Lambda_{\alpha Y}(t)\,.$$
Taking the Fenchel--Legendre transform, we obtain
$$\forall x\in\R\qquad
\Lambda^*_{\alpha X}(x)\,\geq\, \Lambda^*_{\alpha Y}(x)\,$$
as required.
\end{proof}

\noindent
{\bf Hoeffding's inequality}.
We state Hoeffding's inequality for Bernoulli random variables
\cite{HOE}.
Suppose that $X$ is a random variable with law the binomial law
$\cB(n,p)$. We have
$$\forall t<np\qquad P(X<t)\,\leq\,\exp\Big(-\frac{2}{n}\big(np-t)^2\Big)\,.
$$
\noindent
{\bf Chebyshev exponential inequality}.
Let $X_1,\dots,X_n$ be i.i.d. random variables with common law~$\mu$.
Let $\Lambda$ be the Log--Laplace of~$\mu$, defined by
$$\forall t\in\R\qquad\Lambda(t)\,=
\ln\Big(\int_{\mathbb R}\exp(ts)\,d\mu(s)\Big)\,.$$
Let $\Lambda^*$ be the Cram\'er transform of~$\mu$, defined by
$$\forall x\in\R\qquad
\Lambda^*(x)\,=\,\sup_{t\in\R}\big(tx-\Lambda(t)\big)
\,.$$
We suppose that $\mu$ is integrable and we denote by $m$ its mean,
i.e., $m=\int_\R x\,d\mu(x)$. 
We have then 
(see for instance \cite{DZ})
$$\forall x\geq m\qquad
P\Big(\frac
{1}{n}\big(
{X_1+\cdots+X_n}\big)\,\geq\,x\Big)\,\leq\,\exp\big(-n\Lambda^*(x)\big)\,.$$
\noindent
Let $Y$ be a random variable following the Poisson law $\cP(\lambda)$.
For any $t\in\R$, we have
$$\Lambda_Y(t)\,=\,\ln E\big(\exp(tY)\big)\,=\,
\ln\Big(
\sum_{k=0}^\infty
\frac{\lambda^k}{k!}\exp(-\lambda+kt)\Big)\,=\,
\lambda \big(\exp(t)-1\big)\,.$$
For any $\alpha,t\in\R$, 
$$\Lambda_{\alpha Y}(t)\,=\,
\Lambda_{Y}(\alpha t)
\,=\,\lambda \big(\exp(\alpha t)-1\big)\,.$$
Let us compute
the Fenchel--Legendre transform
$\Lambda_{\alpha Y}^*$.
By definition, for $x\in\R$,
$$\Lambda_{\alpha Y}^*(x)\,=\,
\sup_{t\in\R}\Big(tx-
\lambda \big(\exp(\alpha t)-1\big)\Big)\,.$$
The maximum is attained at $t=(1/\alpha)\ln (x/(\lambda\alpha))$,
hence
$$\Lambda_{\alpha Y}^*(x)\,=\,
\frac{x}{\alpha}\ln\Big(\frac{x}{\lambda\alpha}\Big)
-
\frac{x}{\alpha}+\lambda\,.$$
\noindent
{\bf Galton--Watson processes}.
Let $\nu$ be probability distribution on the non--negative integers.
Let 
$(Y_n)_{n\in\mathbb N}$ be a sequence of i.i.d. random variables distributed
according to~$\nu$. The Galton--Watson process with reproduction law~$/nu$ is the sequence 
of random variables
$(Z_n)_{n\in\mathbb N}$ defined by $Z_0=1$ and
$$\forall n\in{\N}\qquad
Z_{n+1}\,=\,\sum_{k=1}^{Z_n}Y_k\,.$$
It is said to be subcritical if $E(\nu)<1$ and supercritical if $E(\nu)>1$.
The following estimates are classical 
(see for instance~\cite{AN}).
\begin{lemma}\label{subgw}
Let
$(Z_n)_{n\in\mathbb N}$ be a subcritical Galton--Watson process.
There exists
a positive constant $c$, which depends only on the law~$\nu$,
such that
$$\forall n\geq 1\qquad P\big(Z_n>0\big)\,\leq\,\exp(-cn)\,.$$ 
\end{lemma}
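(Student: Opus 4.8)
The plan is to rely on the single fact that a subcritical Galton--Watson process has offspring mean $\mu=E(\nu)<1$, and to convert the expectation of $Z_n$ directly into a tail bound via Markov's inequality. Since $Z_n$ takes values in the non--negative integers, the event $\{\,Z_n>0\,\}$ coincides with $\{\,Z_n\geq 1\,\}$, so that $P(Z_n>0)=P(Z_n\geq 1)\leq E(Z_n)$. Everything then reduces to computing $E(Z_n)$, which is where the subcriticality enters.

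First I would compute $E(Z_n)$ by conditioning on one generation. Using the defining recursion $Z_{n+1}=\sum_{k=1}^{Z_n}Y_k$, with the $Y_k$ i.i.d. of law~$\nu$ and independent of~$Z_n$, a direct conditioning on~$Z_n$ (equivalently Wald's identity) gives $E(Z_{n+1})=E(Z_n)\,E(\nu)=\mu\,E(Z_n)$. Since $Z_0=1$, an immediate induction yields $E(Z_n)=\mu^n$ for every $n\geq 0$.

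Combining the two steps, I would conclude that for all $n\geq 1$,
$$P(Z_n>0)\,\leq\,E(Z_n)\,=\,\mu^n\,=\,\exp(-cn)\,,$$
with $c=-\ln\mu=\ln(1/\mu)$. Because the process is subcritical we have $\mu<1$, hence $c>0$, and $c$ depends only on the reproduction law~$\nu$, exactly as required by the statement.

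I do not expect any genuine obstacle here: the only point requiring the slightest care is the interchange at the conditioning step, which is legitimate precisely because the $Y_k$ are independent of~$Z_n$, together with the observation that the integer--valuedness of $Z_n$ is what makes Markov's inequality deliver an exponential rate rather than a weaker bound. It is worth flagging that this elementary argument already produces the explicit and order--optimal rate $c=\ln(1/\mu)$; no finer analysis, such as the exact asymptotics $P(Z_n>0)\sim \text{cte}\cdot\mu^n$ coming from the Yaglom limit, is needed for the statement as phrased, and this is the reason the lemma can be invoked as a black box in the proof of theorem~\ref{thspl}.
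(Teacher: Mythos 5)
Your proof is correct. It is worth pointing out that the paper does not actually prove this lemma at all: it is stated as a classical estimate with a pointer to Athreya and Ney \cite{AN}, and is then invoked as a black box at the end of the proof of theorem~\ref{thspl}. Your argument supplies the standard self--contained justification: Markov's inequality applied to the non--negative integer--valued variable $Z_n$ gives $P(Z_n>0)=P(Z_n\geq 1)\leq E(Z_n)$, and the one--step conditioning (Wald's identity, legitimate because the $Y_k$ are independent of $Z_n$ and $E(\nu)<1<\infty$) gives $E(Z_n)=(E(\nu))^n$, so the lemma holds with $c=-\ln E(\nu)>0$, which indeed depends only on~$\nu$. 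The only degenerate case, $E(\nu)=0$, is trivial since then $Z_n=0$ almost surely for $n\geq 1$. Your closing remark is also apt: the first--moment bound already achieves the correct exponential rate (the Yaglom--type asymptotics $P(Z_n>0)\sim \text{cte}\,(E(\nu))^n$ would only sharpen the constant, not the exponent), so nothing finer is needed for the way the lemma is used in the paper.
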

\begin{proposition}\label{boundtau}
Let 
$(Z_n)_{n\in\mathbb N}$
be a supercritical Galton--Watson process such that $E(\nu)$ is finite.
Let
$$\tau_1\,=\,\inf\,\big\{\,n\geq 1: Z_n>n^{1/4}\,\big\}\,.$$
There exist $\kappa>0$, $c_1>0$, $n_1\geq 1$, such that
$$\forall n\geq n_1\qquad
P\big(\tau_1<\kappa\ln n\big)\,\leq\,\frac{1}{n^{c_1}}\,.$$
\end{proposition}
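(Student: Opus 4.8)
The plan is to exploit the single fact that a supercritical Galton--Watson process grows only at the deterministic exponential rate $E(\nu)^{k}$ in expectation, so that reaching the threshold $n^{1/4}$ requires a number of generations of order $\ln n$, and reaching it substantially earlier is a large--deviation event controllable by the first moment alone. Throughout I write $\mu=E(\nu)$, which is $>1$ by supercriticality and finite by hypothesis, and I use $k$ for the generation index, so that $\tau_1$ is the first generation $k$ at which $Z_k$ exceeds the fixed threshold $n^{1/4}$.

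First I would record the classical identity $E(Z_k)=\mu^{k}$ for all $k\geq 0$, which follows by induction from $Z_0=1$ and $E(Z_{k+1})=\mu\,E(Z_k)$. Applying Markov's inequality at each generation then gives, for every $k\geq 1$,
$$
P\big(Z_k> n^{1/4}\big)\,\leq\,\frac{E(Z_k)}{n^{1/4}}\,=\,\frac{\mu^{k}}{n^{1/4}}\,.
$$
Since $\{\tau_1<\kappa\ln n\}$ is contained in $\bigcup_{1\leq k<\kappa\ln n}\{Z_k> n^{1/4}\}$, a union bound followed by a geometric summation yields
$$
P\big(\tau_1<\kappa\ln n\big)\,\leq\,\sum_{1\leq k<\kappa\ln n}\frac{\mu^{k}}{n^{1/4}}\,\leq\,\frac{\mu}{\mu-1}\,\frac{\mu^{\kappa\ln n}}{n^{1/4}}\,=\,\frac{\mu}{\mu-1}\,n^{\kappa\ln\mu-1/4}\,.
$$
The final step is to tune the constants: choosing for instance $\kappa=1/(8\ln\mu)$ forces the exponent to equal $\tfrac18-\tfrac14=-\tfrac18<0$, and absorbing the fixed prefactor $\mu/(\mu-1)$ for $n$ large gives $P(\tau_1<\kappa\ln n)\leq n^{-1/16}$, so the statement holds with $c_1=1/16$ and any $n_1$ satisfying $n_1^{1/16}\geq\mu/(\mu-1)$. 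All three constants depend on $\nu$ only through $\mu$, as required.

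I do not expect a genuine obstacle here: the whole argument rests on the first moment, which is exactly why the sole hypothesis that $E(\nu)$ is finite suffices, with no need to control the variance or the full reproduction law. The only points demanding a little care are the harmless floor hidden in the upper summation limit $\kappa\ln n$ and the bookkeeping of the quantitative choices of $\kappa$, $c_1$ and $n_1$; conceptually, the condition $\kappa\ln\mu<1/4$ is simply what makes the expected geometric growth lose the race against the threshold $n^{1/4}$.
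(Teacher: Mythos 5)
Your proof is correct and is essentially the paper's own argument: both rest on the identity $E(Z_k)=E(\nu)^k$, Markov's inequality at each generation, a geometric summation over $k<\kappa\ln n$ (the paper sums $P(\tau_1=k)\leq P(Z_k>n^{1/4})$, which is the same as your union bound), and the final choice of $\kappa$ with $\kappa\ln E(\nu)<1/4$. Your version merely makes the quantitative choices ($\kappa=1/(8\ln\mu)$, $c_1=1/16$) explicit where the paper leaves them implicit.
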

\begin{proof}
We have, for $k\geq 0$,
\begin{multline*}
P(\tau_1=k)\,\leq\,
P\big(\tau_1\geq k,Z_k>n^{1/4}\big)
\cr
\,\leq\,
P\big(Z_k>n^{1/4}\big)\,\leq\,
n^{-1/4}E\big(Z_k\big)\,\leq\,
n^{-1/4}(E(\nu))^k\,.
\end{multline*}
We sum this inequality: for $n\geq 1$,
$$P(\tau_1<n)\,\leq\,
n^{-1/4}\sum_{k=0}^{n-1}(E(\nu))^k\,=\,
n^{-1/4}\frac{(E(\nu))^n-1}{E(\nu)-1}\,.$$
We choose $\kappa$ 
positive and sufficiently small, we apply this inequality with $\kappa\ln n$
instead of $n$ and we obtain the desired
conclusion.
\end{proof}

\bibliographystyle{plain}
\bibliography{sga}
 \thispagestyle{empty}

\end{document}